\theoremstyle{plain}
\newtheorem{theorem}{Theorem}[section]
\newtheorem{lemma}[theorem]{Lemma}
\newtheorem{corollary}[theorem]{Corollary}
\theoremstyle{definition}
\theoremstyle{remark}
\icmltitlerunning{Extending Fair Null-Space Projections for Continuous Attributes to Kernel Methods}
\begin{document}
	
	\twocolumn[
	\icmltitle{Extending Fair Null-Space Projections for \\  Continuous Attributes to Kernel Methods}
	
	
	
	\icmlsetsymbol{equal}{*}
	
	\begin{icmlauthorlist}
		\icmlauthor{Felix Störck}{ubi}
		\icmlauthor{Fabian Hinder}{ubi}
		\icmlauthor{Barbara Hammer}{ubi}
	\end{icmlauthorlist}

	\icmlaffiliation{ubi}{CITEC, Faculty of Technology, Bielefeld University, Germany}
	
	\icmlcorrespondingauthor{Felix Störck}{fstoerck@techfak.uni-bielefeld.de}

	\vskip 0.3in
	]
	
	
	
	\printAffiliationsAndNotice{} 

	\begin{abstract}
		With the on-going integration of machine learning systems into the  everyday social life of millions the notion of fairness becomes an ever increasing priority in their development. 
		Fairness notions commonly rely on protected attributes to assess potential biases. 
		Here, the majority of literature focuses on discrete setups regarding both target and protected attributes.
		The literature on continuous attributes especially in conjunction with regression -- we refer to this as \emph{continuous fairness} -- is scarce. 
		A common strategy is iterative null-space projection which as of now has only been explored for linear models or embeddings such as obtained by a non-linear encoder. 
		We improve on this by extending this to kernel induced feature spaces by means of the ``empirical feature space''.
		We theoretically derive this as a direct transformation of the kernel matrix yielding a model and fairness-score agnostic method applicable to continuous protected attributes.
		We demonstrate that our novel approach in conjunction with Support Vector Regression (SVR) provides competitive or improved performance across multiple datasets in comparison to other contemporary methods.
	\end{abstract}
	
	\section{Introduction}
	``The development, deployment and use of AI systems must be fair''  is a central statement in the Ethics Guidelines for Trustworthy AI of the EU \citet[p.~12]{european_commission_ethics_2019}.
	Similarly fair treatment is in many areas governed by anti-discrimination laws such as Article 21 in the charter of fundamental rights of the European \citet{european_union_charter_2012} which among others include protection against discrimination based on ``age'', ``sex'' and ``race''.
	Evidently, in recent years significant attention has been drawn to this issue not only in the ML community itself but also by legislators.
	This led to a series of developments to ensure the fairness of ML systems.
	
	ML systems typically make use of all data provided to them. 
	As a result, they can inherit biases -- such as those stemming from ageism or racism -- and may learn to reproduce these patterns. 
	Fair ML aims to address this issue by attempting to mitigate or remove such biases. 
	Most commonly, this is done by explicitly marking certain features as protected attributes -- for example, ``age'', ``sex'' or ``race'' -- which should not influence the learned decision rules, such as determining ``creditworthiness'' or the ``crime probability''.
	The challenge, however, lies in the fact that even if these attributes are not directly included in the training data, they can still be inferred indirectly through hidden correlations, allowing the system to pick up on them unintentionally.
	
	Most often, the focus is on categorical protected attributes (such as ``race'') in a classification setting.
	Whilst a large body of literature assumes this setting, an attribute such as ``age'' is not naturally categorical and other attributes such as ``race'' are commonly found as a population percentage in a given community which is then often categorized as pre-processing.
	Further, the more general regression setting is also not as frequently considered thus rendering the available options to ensure the by law required non-discrimination guarantees slim.
	We will refer to the setup in which the targets as well as the protected attributes are continuous rather than discrete as ``\emph{continuous fairness}''.
	
	A common strategy in fairness is the idea of ``null-space projection'': first a direction in space predictive of the protected attribute is determined and then the data projected so that this direction becomes uninformative.
	An iterative process that each time removes different pieces of information.
	However, this idea is thus far limited to either linear methods or the application to non-linear embeddings.
	This work significantly extends the approach to kernel induced feature spaces, specifically our contributions are the following:
	\begin{itemize}
		\item We derive a method to perform a direct transformation of the kernel matrix that corresponds to a null-space projection in the empirical feature space. We show that the theoretical properties of the kernel matrix are retained and demonstrate out-of-sample extension.
		\item We illustrate that the transformed kernel is model agnostic: most notably giving rise to a method for fair Support Vector Regression. 
		\item We empirically obtain competitive or improved performance against other contemporary methods on a range of different datasets and fairness scores.
	\end{itemize}
	This paper is structured as follows: first we introduce the field of fair machine learning by discussing a selection of related works (\Cref{sec:RW}) with a focus on social implications in \Cref{sec:soc_imp} before presenting the extension of null-space projections to kernel methods (\Cref{sec:meth}).
	In \Cref{sec:exp_res} we empirically demonstrate competitive performance on real-world datasets, afterwards discussing our approach, its limitations and directions for future work (\Cref{sec:disc}).
	
	\section{Related Work}\label{sec:RW}
	We first discuss the general setting of fair ML before diving into specific fairness approaches related to subspace projections and kernel methods.
	\subsection{Fairness in Machine Learning} \label{sec:fair_measures_reg}
	Two main stream characterizations of fairness in machine learning can be distinguished: \textit{individual} and \textit{group} fairness.
	The former aims for similar predictions for similar individuals, the latter looks at the group specific predictions given a so-called protected attribute (such as ``race'' or ``age'').
	For an extensive discussion see \citet{binns_apparent_2020}.
	The most common measure for fairness are \textit{Demographic/Statistical Parity} and \textit{Equalized Odds}.
	The former aims at an equal probability to receive an advantageous decision/label (e.g. credit worthiness) whilst the latter also takes the true label into account.
	It is not possible to directly compute these for the continuous fairness setting and hence adequate adaptions are required which we discuss next.
	
	\subsection{Continuous Fairness Measures}
	
	In recent years, a set of novel approaches to compute fairness measures for continuous fairness have been introduced -- most aim to measure the statistical dependence between the model prediction and the protected attribute.
	\citet{mary_fairness-aware_2019} introduce a fairness measure based on the Hirschfeld–Gebelein–Rényi (HGR) maximum correlation and a method for its estimation based on kernel density estimations. 
	Concurrently, two generalizations of demographic parity were introduced: \citet{giuliani_generalized_2023} present (inspired by properties of the HGR) a family of Generalized Disparate Impact indicators (GeDI)  and \citet{jiang_generalized_2022} put forward ``Generalized Demographic Parity'' (GDP) by using a kernel density estimation to weight local and global prediction averages.
	
	A different approach is taken by \citet{narasimhan_pairwise_2020} by deriving pairwise fairness (PF), a measure based on pairwise comparisons for demographic parity and equalized odds.
	
	It has been mathematically shown \cite{kleinberg_inherent_2017} that the respective objectives of different fairness measures are mutually exclusive.
	It therefore remains up to the designer of the machine learning system which measure to employ during training.
	This is aggravated in the continuous fairness setting as computational tractability can become an issue depending on the chosen measure and its parameters.
	
	One strategy is to include one of the fairness measures (or a suitable surrogate) into the optimization problem as a penalty -- for example \citet{mary_fairness-aware_2019} use a neural network penalized by an upper bound of the HGR (denoted ``NN-HGR''), \citet{kong_fair_2025} penalize with a new metric whilst \citet{lai_fairicp_2025} rely on an adversarial technique.
	
	\subsection{Continuous Fairness: Social Implications} \label{sec:soc_imp}
	This section aims to explore how the wider body of literature including social sciences deal with the issue of continuous fairness and how it is recognized therein.

	\citet{stypinska_ai_2023} discusses the general topic of ageism related to AI systems and how this on the technical level can be attributed to under-representation or cutoffs of certain age groups whilst often being neglected in the fairness discourse.
	\citet{chu_age-related_2023} perform an extensive survey to explore the leading sources of age-related biases in AI systems concluding among other things that ``the second phenomenon is the misinterpretation of age as a category [...]''~(p. 12).
	The effect of categorizing ``age'' has been recognized by \citet{hort_privileged_2022} yet the focus rather remains on finding suitable categories than treating it as a continuous quantity.
	
	Regarding the attribute ``race'', many works in the social and related sciences commonly work with it as a population percentage, for example when analyzing the connection to fatal police shootings \cite{zare_analyzing_2025} or traffic citations \cite{xu_racial_2024}.
	Note that these works rely on data from the ``American Community Survey'' which also inspires two of the three datasets used in this work for empirical evaluation.
	
	This demonstrates a clear disparity between how the attributes are dealt with by experts in practice and how they are (often for algorithmic convenience) treated within the ML community.

	\subsection{Sub-space Projections for Fairness}
	Subspace projections comprise methods aimed at identifying subspaces with specific properties usually to ensure subsequently applied algorithms satisfy certain guarantees.
	This is a common approach to ensure workspace constraints in robotics \cite{dietrich_overview_2015}.
	
	In the domain of fairness \citet{perez-suay_fair_2017} also introduce a fair dimensionality reduction technique based on the ``Hilbert-Schmidt independence criterion'' (HSIC) \cite{gretton_kernel_2007}.
	
	\citet{tan_learning_2020} take the viewpoint of empirical risk minimization trying to find a subspace of the hypothesis space that fulfills predictive but also fairness properties for which they derive an approach for Gaussian Processes.

	\citet{ravfogel_null_2020} on the other hand suggest to first find a direction in space deemed relevant for predicting the protected attribute -- in their case the weights of a linear classifier\footnote{It is mentioned that this can be extended to the regression setting, however to the best of the authors' knowledge this has not yet been thoroughly investigated in the community.} whose null-space is then used for the projection of either the data or their embedding in a neural network.
	
	This work generalizes the work of \citet{ravfogel_null_2020} by considering the regression setting for kernel methods i.e. we provide means to work with null-space projections for the non-linear features induced by kernels including cases where the corresponding feature space is infinite dimensional which is a novel contribution.
	As our approach focuses on kernel methods we discuss previous work on fairness with kernels in the next section.
	
	\subsection{Fairness in Kernel Methods}
	There is some work regarding the continuous fairness setting with kernels in the literature:
	\citet{perez-suay_fair_2017} also develop a fair approach for Kernel Ridge Regression (we refer to this as ``KRR-FKL'') using the HSIC which is extended to Gaussian Processes but without out-of-sample extension \cite{perez-suay_fair_2023}.
	\citet{oneto_general_2020} introduce constraints to the KRR optimization to develop an alternative approach. 
	
	\citet{ravfogel_adversarial_2022} propose a kernelized minimax game for removing non-linear information about categorical\footnote{The formalism itself does not appear to be inherently restricted to categorical protected attributes, but the paper itself as well as the provided implementation exclusively address categorical attributes.} protected attributes. \citet{grunewalder_oblivious_2021} use the kernel-induced feature space to design features that remain close to the original data while being minimally dependent on the (possibly continuous) protected attributes.
	
	In contrast, even though there exists a variety of different approaches for fair Support Vector Machines (SVMs) \cite{olfat_spectral_2018,donini_empirical_2018,zafar_fairness_2019,carrizosa_wasserstein_2023} they are neither extendable to regression nor to continuous protected attributes -- at least without more intricate changes as they mostly rely on modification of the original  optimization problem to incorporate different measures of fairness as constraints or penalties.
	
	The method introduced next is model agnostic and applicable for fairness as long as the approach relies on kernels.
	
	\section{Method}\label{sec:meth}
	In contrast to other approaches, we do not directly optimize for a specific fairness-score.
	Rather, we aim to remove information that is required to predict the protected attribute.
	For this we extend the idea of null-space projections presented by \citet{ravfogel_null_2020} to the setting of kernel methods.
	
	\subsection{Motivation: Feature Space}
	At the heart of most kernel methods lies the kernel trick: many classical algorithms can ultimately be expressed solely in terms of inner products between data-points.
	Kernels exploit this by enabling efficient computation of inner products in potentially infinite‑dimensional spaces.
	Formally, they are equivalent to performing an implicit pre-processing step in a (possibly  infinite-dimensional) feature space.
	A kernel $k:\mathcal{X}\times\mathcal{X}\to \mathbb{R}$ is a map that factors through some inner product, i.e.
	\begin{equation}
		k(\mathbf{x},\mathbf{x}')=\langle \Phi(\mathbf{x}),\Phi(\mathbf{x}') \rangle
	\end{equation}
	where $\Phi$ is the feature map to some Hilbert space (the corresponding Hilbert space is referred to as feature space, the original space as input space).
	The most well known kernel is the rbf-kernel given by
	\begin{equation}\label{eq:krbf}
		k_{rbf}(\mathbf{x},\mathbf{x}')\coloneq\text{exp}(-\gamma\lVert \mathbf{x} -\mathbf{x}' \rVert^2 )
	\end{equation}
	with parameter $\gamma$. 
	This is also the one considered in this work and it does correspond to an infinite-dimensional feature space  \cite{shawe-taylor_kernel_2004}.
	
	The naive extension of null-space projections to the respective feature spaces is only applicable if the dimension is finite and only tractable if quite small -- as it is impossible to directly work with data living in an infinite dimensional space.
	However, there are two distinct ways this issue can be tackled: we usually only require the inner product between two projected points which can be implicitly expressed as a kernel recurrence or by using the so-called \textit{empirical feature space} which is the approach taken in the following.
	
	\subsection{Fairness in the Empirical Feature Space}\label{sec:f_in_efs}
	Given $n$ samples let $\mathbf{K}\in \mathbb{R}^{n\times n}$ denote the kernel matrix induced by a kernel $\mathbf{K}_{ij} = k(\mathbf{x}_i,\mathbf{x}_j)$  applied to the $d$-dimensional data stored in $\mathbf{X}\in\mathbb{R}^{n \times d}$.
	Our goal is to remove information about the protected attribute from $\mathbf{K}$ whilst keeping its required properties. 
	As $\mathbf{K}$ is symmetric positive-semidefinite it decomposes as
	\begin{equation} \label{eq:kernel_eigen}
		\mathbf{K} = \mathbf{Q}\mathbf{\Lambda}\mathbf{Q}^\top = \mathbf{G}\mathbf{G}^\top 
	\end{equation}
	using the eigendecomposition of $\mathbf{K}$ and defining $\mathbf{G}\coloneq\mathbf{Q}\mathbf{\Lambda}^{1/2}$ where $\mathbf{\Lambda}^{1/2}\coloneq\text{diag}(\sqrt{\lambda_1},...,\sqrt{\lambda_n})$.
	Note that the columns of $\mathbf{Q}$ correspond to the eigenvectors of $\mathbf{K}$.
	We view $\mathbf{G} = (\mathbf{g}_1,...,\mathbf{g}_n)^\top\in\mathbb{R}^{n\times n}$ as an abstract data representation for the linear kernel, i.e., $k(\mathbf{x}_i,\mathbf{x}_j) = \langle \mathbf{g}_i,\mathbf{g}_j\rangle$ and $\mathbf{g}_i$ being a column vector.
	This means that we can write $\mathbf{G}\mathbf{G}^\top = \mathbf{K}$.
	This is called the empirical feature space which provides an alternative view on the kernel trick and is defined as $\mathbf{Y}\coloneq\mathbf{KQ\Lambda}^{-1/2}$ \cite{scholkopf_input_1999,xiong_optimizing_2005,kwak_nonlinear_2013}.
	The connection to  \Cref{eq:kernel_eigen} can be seen by observing that
	\begin{align}
		\mathbf{KQ\Lambda}^{-1/2}=\mathbf{Q}\mathbf{\Lambda}\mathbf{Q}^\top\mathbf{Q\Lambda}^{-1/2} = \mathbf{Q}\mathbf{\Lambda}^{1/2} = \mathbf{G}.
	\end{align}
	This \textit{empirical feature space} retains the geometric structures of the finite subspace spanned by the training data within the feature space (it is isometric) and corresponds to the subspace that a kernel method relies on for training \cite{scholkopf_input_1999}.
	As $\mathbf{g}_1,...,\mathbf{g}_n$ is a finite dimensional representation we can directly apply subspace projections and apply the linear kernel afterwards.
	This is now done by null-space projections as presented by \citet{ravfogel_null_2020} for the case of regression.
	That is first we find a weighting that contains information for predicting the protected attribute $\mathbf{p}$
	\begin{equation}
		\min_\mathbf{w} \lVert \mathbf{p} - \mathbf{G}\mathbf{w} \rVert^2 + \tilde{\alpha} \lVert \mathbf{w} \rVert^2
	\end{equation}
	by a default ridge-regression approach.
	The solution to this is straight-forward and well known as
	\begin{equation}\label{eq:deriv_eq}
		\mathbf{w} = \mathbf{G}^\top(\mathbf{G}\mathbf{G}^\top + \tilde{\alpha} \text{Id})^{-1}\mathbf{p}.
	\end{equation}
	Note that there are two equivalent ways of writing \cref{eq:deriv_eq}, this version is required for the kernel trick.
	Similarly, we require this so the projection can later be expressed in terms of $\mathbf{K}$.
	Then we remove the information from $\mathbf{G}$ by enforcing
	\begin{equation} \label{eq:enforce}
		\mathbf{G}\mathbf{w} \overset{!}{=} \mathbf{0}
	\end{equation}
	with the null-space projection $\mathbf{P}^\mathbf{G}$ as follows:
	\begin{equation}\label{eq:empirical_projection}
		\mathbf{G}_{(1)}=\mathbf{G}_{(0)}\mathbf{P}_{(0)}^\mathbf{G}
	\end{equation}
	where we define $\mathbf{P}_{(1)}^\mathbf{G}\coloneq(\text{Id}-\mathbf{w}(\mathbf{w}^\top\mathbf{w})^{-1}\mathbf{w}^\top)$ as this enforces \cref{eq:enforce} and where the scalar $(\mathbf{w}^\top\mathbf{w})^{-1}$ ensures normalization.
	Of course $\mathbf{G}_{(1)}$ might still contain information about the protected attribute so this process can be iterated by finding a new $\mathbf{w}$ and chaining the projection matrices:
	\begin{align} \label{eq:projection}
		\mathbf{G}_{(m)} = \mathbf{G}_{(0)} \prod_{i=0}^{m-1} \mathbf{P}^{\mathbf{G}_{(i)}}.
	\end{align}
	The projected kernel matrix at the $m$-th iteration is then retrieved by
	\begin{equation}\label{eq:recover}
		\mathbf{K}_{(m)} = \mathbf{G}_{(m)}\mathbf{G}^\top_{(m)}.
	\end{equation}
	which can now be used as a kernel for training a prediction task with respect to the target. Note that $\mathbf{K}_{(0)}$ corresponds to the original kernel matrix.
	Yet, it is not clear whether the product $\prod_{i=0}^{m-1} \mathbf{P}^{\mathbf{G}_{(i)}}$ is a valid projection matrix as in general the product of projections does not need to be a projection itself.
	This is ensured by the subsequent lemma following the approach taken in \citet{ravfogel_null_2020}.
	\begin{lemma}\label{th:projection_G}
		$\prod_{i=0}^{m-1} \mathbf{P}^{\mathbf{G}_{(i)}}$ is a projection.
	\end{lemma}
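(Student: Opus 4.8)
The plan is to reduce \cref{th:projection_G} to the classical fact that a product of rank-one orthogonal projections onto \emph{mutually orthogonal} hyperplanes is again an orthogonal projection. Hence the real content is to show that the successive weight vectors $\mathbf{w}_0,\dots,\mathbf{w}_{m-1}$ produced by the iteration are pairwise orthogonal, and this will come out of the particular (kernel-trick) form of the ridge solution in \cref{eq:deriv_eq}.

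First I would note that each factor $\mathbf{P}^{\mathbf{G}_{(i)}} = \text{Id} - \mathbf{w}_i(\mathbf{w}_i^\top\mathbf{w}_i)^{-1}\mathbf{w}_i^\top$ is, once the normalizing scalar is included, exactly the orthogonal projection onto $\mathbf{w}_i^{\perp}$ (symmetry and idempotency being a one-line check; if $\mathbf{w}_i=\mathbf{0}$ the factor is $\text{Id}$ and may be dropped). The crucial structural observation is that writing \cref{eq:deriv_eq} as $\mathbf{w}_i = \mathbf{G}_{(i)}^{\top}(\mathbf{G}_{(i)}\mathbf{G}_{(i)}^{\top} + \tilde{\alpha}\,\text{Id})^{-1}\mathbf{p}$ (viewed as a column vector) forces $\mathbf{w}_i$ to lie in the column space of $\mathbf{G}_{(i)}^{\top}$, i.e.\ in the row space of $\mathbf{G}_{(i)}$; this is precisely why this form, rather than the algebraically equivalent one, is the one that is used.

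The heart of the proof is an induction on $i$ establishing that the row space of $\mathbf{G}_{(i)}$ is contained in $\mathbf{w}_0^{\perp}\cap\cdots\cap\mathbf{w}_{i-1}^{\perp}$. For the inductive step, write $\mathbf{G}_{(i+1)} = \mathbf{G}_{(i)}\mathbf{P}^{\mathbf{G}_{(i)}}$; since $\mathbf{P}^{\mathbf{G}_{(i)}}$ is symmetric, each row of $\mathbf{G}_{(i+1)}$ is the corresponding row of $\mathbf{G}_{(i)}$ with a scalar multiple of $\mathbf{w}_i$ subtracted. Because $\mathbf{w}_i$ lies in the row space of $\mathbf{G}_{(i)}$, this shows the row space of $\mathbf{G}_{(i+1)}$ is contained in that of $\mathbf{G}_{(i)}$; and because $\mathbf{P}^{\mathbf{G}_{(i)}}$ projects onto $\mathbf{w}_i^{\perp}$, it is also contained in $\mathbf{w}_i^{\perp}$. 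Combining these two inclusions with the induction hypothesis gives the claim for $i+1$. Since $\mathbf{w}_i$ always lies in the row space of $\mathbf{G}_{(i)}$, the inclusion at stage $i$ says exactly $\mathbf{w}_i \perp \mathbf{w}_j$ for all $j<i$, so the weight vectors are pairwise orthogonal.

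Finally, with $\hat{\mathbf{w}}_i \coloneq \mathbf{w}_i/\lVert\mathbf{w}_i\rVert$, pairwise orthogonality makes every cross term $\hat{\mathbf{w}}_i\hat{\mathbf{w}}_i^{\top}\hat{\mathbf{w}}_j\hat{\mathbf{w}}_j^{\top}$ with $i\neq j$ vanish, so multiplying out the product in \cref{eq:projection} collapses to $\prod_{i=0}^{m-1}\mathbf{P}^{\mathbf{G}_{(i)}} = \text{Id} - \sum_{i=0}^{m-1}\hat{\mathbf{w}}_i\hat{\mathbf{w}}_i^{\top}$, which is manifestly symmetric and idempotent, hence the orthogonal projection onto $(\operatorname{span}\{\mathbf{w}_0,\dots,\mathbf{w}_{m-1}\})^{\perp}$. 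I expect the main obstacle to be organizational rather than mathematical: phrasing the induction so that the row-space inclusion and the mutual orthogonality of the $\mathbf{w}_i$ are carried together, and reconciling the row/column conventions used for $\mathbf{w}$ in \cref{eq:deriv_eq} versus in the definition of $\mathbf{P}^{\mathbf{G}_{(i)}}$; once the membership $\mathbf{w}_i\in\operatorname{row}(\mathbf{G}_{(i)})$ is secured, the remaining linear algebra is routine.
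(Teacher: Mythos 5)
Your proof is correct and follows essentially the same route as the paper: both arguments hinge on the observation that each $\mathbf{w}_i$ lies in the row space of $\mathbf{G}_{(i)}$ (the paper invokes the representer theorem where you read it directly off the closed form $\mathbf{w}_i=\mathbf{G}_{(i)}^\top(\mathbf{G}_{(i)}\mathbf{G}_{(i)}^\top+\tilde{\alpha}\,\mathrm{Id})^{-1}\mathbf{p}$), which forces pairwise orthogonality of the weight vectors and collapses the product of rank-one projections into a single orthogonal projection. The only difference is that your write-up is self-contained, carrying out the full induction and the final telescoping explicitly, whereas the paper proves only the consecutive-orthogonality step and defers the remainder to \citet[A.1.1--A.1.3]{ravfogel_null_2020}.
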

	\begin{proof}
		All proofs are found in \Cref{sec:proofs}. 
	\end{proof}
	
	To perform the projection for $k$ unseen test points they have to be first mapped to the empirical feature space (see \citet{xiong_optimizing_2005,kwak_nonlinear_2013}), then the projection in \Cref{eq:projection} performed and the linear kernel applied.
	This detour can be avoided by expressing this in a single transformation directly in terms of the kernel matrix which is one of our key contributions and is given by the following theorem:
	\begin{theorem}\label{th:projection}
		Let $\mathbf{K}_{(m)}$ be defined as in \cref{eq:recover}. Then for $m>0$ it holds:
		\begin{equation}\label{eq:main_statement}
			\mathbf{K}_{(m)} = \mathbf{K}_{(m-1)}\mathbf{T}^{\mathbf{K}_{(m-1)}}
		\end{equation}
		where we set $\tau_{norm}\coloneq(\mathbf{w}^T\mathbf{w})^{-1}$ and define
		\begin{equation*}
			\mathbf{T}^{\mathbf{K}_{(m)}} \coloneq (\textnormal{Id}-\mathbf{M}\mathbf{K}_{(m)})
		\end{equation*}
		\begin{equation*}
			\mathbf{M}\coloneq(\mathbf{K}_{(m)} +\tilde{\alpha} \textnormal{Id})^{-1}\mathbf{p}\tau_{norm}\mathbf{p}^\top(\mathbf{K}_{(m)} +\tilde{\alpha} \textnormal{Id})^{-1}.
		\end{equation*}
	\end{theorem}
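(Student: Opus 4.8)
\emph{Proof proposal.} The plan is to reduce the right-hand side of \cref{eq:main_statement} entirely to the kernel matrix $\mathbf{K}_{(m-1)}$ by combining the factorisation $\mathbf{G}_{(m-1)}\mathbf{G}_{(m-1)}^{\top}=\mathbf{K}_{(m-1)}$ with the closed-form ridge solution \cref{eq:deriv_eq}. Abbreviate $\mathbf{G}\coloneq\mathbf{G}_{(m-1)}$, $\mathbf{K}\coloneq\mathbf{K}_{(m-1)}$ and $\mathbf{A}\coloneq(\mathbf{K}+\tilde{\alpha}\,\text{Id})^{-1}$; since $\mathbf{K}$ is symmetric so is $\mathbf{A}$. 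First I would record two facts about the single-step projection $\mathbf{P}^{\mathbf{G}}=\text{Id}-\tau_{norm}\,\mathbf{w}\mathbf{w}^{\top}$ (with $\tau_{norm}=(\mathbf{w}^{\top}\mathbf{w})^{-1}$ a scalar, assumed finite, i.e.\ $\mathbf{w}\neq\mathbf{0}$): it is symmetric, and it is idempotent, because $\tau_{norm}\,\mathbf{w}^{\top}\mathbf{w}=1$ forces $(\mathbf{P}^{\mathbf{G}})^{2}=\mathbf{P}^{\mathbf{G}}$. Using \cref{eq:empirical_projection} and \cref{eq:recover} this gives
\begin{align*}
	\mathbf{K}_{(m)} &= \mathbf{G}\,\mathbf{P}^{\mathbf{G}}(\mathbf{P}^{\mathbf{G}})^{\top}\mathbf{G}^{\top} = \mathbf{G}\,\mathbf{P}^{\mathbf{G}}\,\mathbf{G}^{\top} \\
	&= \mathbf{K}-\tau_{norm}\,(\mathbf{G}\mathbf{w})(\mathbf{G}\mathbf{w})^{\top}.
\end{align*}

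Then I would substitute the closed form of $\mathbf{w}$. Read as a column vector, \cref{eq:deriv_eq} is $\mathbf{w}=\mathbf{G}^{\top}\mathbf{A}\mathbf{p}$, so that $\mathbf{G}\mathbf{w}=\mathbf{G}\mathbf{G}^{\top}\mathbf{A}\mathbf{p}=\mathbf{K}\mathbf{A}\mathbf{p}$ and, using symmetry of $\mathbf{A}$, $(\mathbf{G}\mathbf{w})^{\top}=\mathbf{p}^{\top}\mathbf{A}\mathbf{K}$. Substituting and freely commuting the scalar $\tau_{norm}$ yields
\begin{align*}
	\mathbf{K}_{(m)} &= \mathbf{K}-\mathbf{K}\mathbf{A}\mathbf{p}\,\tau_{norm}\,\mathbf{p}^{\top}\mathbf{A}\mathbf{K} \\
	&= \mathbf{K}-\mathbf{K}\big(\mathbf{A}\mathbf{p}\,\tau_{norm}\,\mathbf{p}^{\top}\mathbf{A}\big)\mathbf{K} = \mathbf{K}\big(\text{Id}-\mathbf{M}\mathbf{K}\big),
\end{align*}
which is precisely $\mathbf{K}_{(m-1)}\mathbf{T}^{\mathbf{K}_{(m-1)}}$ after unfolding $\mathbf{M}=\mathbf{A}\mathbf{p}\,\tau_{norm}\,\mathbf{p}^{\top}\mathbf{A}$. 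I would additionally note that $\tau_{norm}=(\mathbf{w}^{\top}\mathbf{w})^{-1}=(\mathbf{p}^{\top}\mathbf{A}\mathbf{K}\mathbf{A}\mathbf{p})^{-1}$, so that $\mathbf{T}^{\mathbf{K}_{(m-1)}}$ depends on $\mathbf{K}_{(m-1)}$, $\mathbf{p}$ and $\tilde{\alpha}$ only; this is what makes the transformation applicable to out-of-sample kernel entries without ever forming $\mathbf{G}$.

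The algebra is routine; the step that really needs care is the collapse $\mathbf{P}^{\mathbf{G}}(\mathbf{P}^{\mathbf{G}})^{\top}=\mathbf{P}^{\mathbf{G}}$ in the first display, which rests on $\mathbf{P}^{\mathbf{G}}$ being an orthogonal projection (symmetric and idempotent) -- the same structural fact underlying \Cref{th:projection_G}; without idempotency one is left with a spurious $(\mathbf{P}^{\mathbf{G}})^{2}$ and the stated identity fails. A secondary point to keep consistent is the indexing: the $\mathbf{w}$, and hence $\tau_{norm}$ and $\mathbf{M}$, entering $\mathbf{T}^{\mathbf{K}_{(m-1)}}$ is the one obtained from $\mathbf{G}_{(m-1)}$ (equivalently from $\mathbf{K}_{(m-1)}$) via \cref{eq:deriv_eq}, and the regulariser $\tilde{\alpha}$ is fixed across iterations; with this convention the transformation is well defined and the recursion \cref{eq:main_statement} can be chained as in \cref{eq:projection}.
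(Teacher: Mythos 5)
Your proposal is correct and follows essentially the same route as the paper's own proof: both use the symmetry and idempotency of the null-space projection to reduce $\mathbf{K}_{(m)}$ to $\mathbf{G}\mathbf{P}^{\mathbf{G}}\mathbf{G}^\top$, then substitute the ridge solution $\mathbf{w}=\mathbf{G}^\top(\mathbf{K}+\tilde{\alpha}\,\textnormal{Id})^{-1}\mathbf{p}$ so that every occurrence of $\mathbf{G}\mathbf{G}^\top$ collapses to $\mathbf{K}_{(m-1)}$, yielding $\mathbf{K}_{(m-1)}(\textnormal{Id}-\mathbf{M}\mathbf{K}_{(m-1)})$. Your closing remark that $\tau_{norm}=(\mathbf{p}^{\top}\mathbf{A}\mathbf{K}\mathbf{A}\mathbf{p})^{-1}$ in fact states the quantity more carefully than the paper's displayed formula (which drops the outer inverse, though the algorithm listing has it right).
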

	\Cref{alg:main_alg} provides pseudo-code of this procedure.
	The following corollary emphasizes that the resulting kernel matrix \textit{firstly} still corresponds to the inner product in some feature space and thus \textit{secondly} ensures necessary properties such as convexity in subsequent algorithms (such as SVR).
	\begin{corollary}\label{cor:pos_kern}
		$\mathbf{K}_{(m)}$ in eq. ($\ref{eq:main_statement}$) is positive semi-definite.
	\end{corollary}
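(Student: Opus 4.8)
The plan is to reduce the statement to the elementary fact that any Gram matrix is positive semi-definite, using \Cref{th:projection} as the bridge. First I would recall that by definition \cref{eq:recover} we have $\mathbf{K}_{(m)} = \mathbf{G}_{(m)}\mathbf{G}_{(m)}^\top$, and that \Cref{th:projection} asserts precisely that this matrix coincides with the expression $\mathbf{K}_{(m-1)}\mathbf{T}^{\mathbf{K}_{(m-1)}}$ appearing on the right-hand side of \cref{eq:main_statement}. Hence it suffices to argue that $\mathbf{G}_{(m)}\mathbf{G}_{(m)}^\top$ is positive semi-definite.

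The verification then splits into the two defining properties. Symmetry is immediate from $(\mathbf{G}_{(m)}\mathbf{G}_{(m)}^\top)^\top = \mathbf{G}_{(m)}\mathbf{G}_{(m)}^\top$. For semi-definiteness, for an arbitrary $\mathbf{v}\in\mathbb{R}^n$ one computes $\mathbf{v}^\top \mathbf{G}_{(m)}\mathbf{G}_{(m)}^\top \mathbf{v} = \lVert \mathbf{G}_{(m)}^\top \mathbf{v}\rVert^2 \ge 0$. The only point requiring a word of care is that $\mathbf{G}_{(m)}$ is a genuine real matrix: this holds because $\mathbf{G}_{(0)} = \mathbf{Q}\mathbf{\Lambda}^{1/2}$ is real (the original kernel matrix being symmetric positive semi-definite, its eigenvalues are non-negative and $\mathbf{\Lambda}^{1/2}$ is well defined over $\mathbb{R}$), and $\mathbf{G}_{(m)} = \mathbf{G}_{(0)}\prod_{i=0}^{m-1}\mathbf{P}^{\mathbf{G}_{(i)}}$ is then a finite product of real matrices by \cref{eq:projection}.

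Equivalently, I could phrase the argument inductively on $m$: $\mathbf{K}_{(0)}$ is the original kernel matrix and thus positive semi-definite, and if $\mathbf{K}_{(m-1)} = \mathbf{G}_{(m-1)}\mathbf{G}_{(m-1)}^\top$ is positive semi-definite then $\mathbf{K}_{(m)} = \bigl(\mathbf{G}_{(m-1)}\mathbf{P}^{\mathbf{G}_{(m-1)}}\bigr)\bigl(\mathbf{G}_{(m-1)}\mathbf{P}^{\mathbf{G}_{(m-1)}}\bigr)^\top$ is again a Gram matrix, which \Cref{th:projection} identifies with the expression in \cref{eq:main_statement}. I do not anticipate a genuine obstacle here: the substance of the corollary is already carried by \Cref{th:projection}, and once that identity is available the positive semi-definiteness amounts to observing that the null-space construction never leaves the class of Gram matrices. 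The one subtlety worth stating explicitly is the realness (and hence well-definedness) of the factor $\mathbf{\Lambda}^{1/2}$, which is exactly the place where the positive semi-definiteness of the initial kernel matrix is used.
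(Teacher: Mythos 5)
Your argument is correct and is essentially the paper's own proof: the paper's one-line justification is that by \cref{eq:recover} and \Cref{th:projection}, $\mathbf{K}_{(m)}$ equals the linear kernel applied to $\mathbf{G}_{(m)}$, i.e.\ it is a Gram matrix $\mathbf{G}_{(m)}\mathbf{G}_{(m)}^\top$ and hence positive semi-definite. Your additional remark about the realness of $\mathbf{\Lambda}^{1/2}$ (and thus of $\mathbf{G}_{(0)}$) is a sensible detail the paper leaves implicit, but it does not change the route.
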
	
	Note that this also extends to the case of multiple protected attributes (see \Cref{app:main_proof}).
	\Cref{th:projection} yields a transformation $\mathbf{T}^{\mathbf{K}_{(m-1)}}$ for a single iteration which removes the relevant information on per-row basis whilst maintaining positive semi-definiteness\footnote{A kernel $k$ is usually called valid or positive definite if the corresponding kernel matrix is positive semi-definite for all finite number of points.}. 
	Most importantly this can be directly applied to $\mathbf{K}_{test}\in\mathbb{R}^{k \times n}$.
	The transformations can be combined to yield a single transformation w.r.t. $\mathbf{K}_{(0)}$ as
	\begin{equation}
		\mathbf{T}_{m}\coloneq\prod_{i=0}^{m-1} \mathbf{T}^{\mathbf{K}_{(i)}}.
	\end{equation}
	so that it holds $\mathbf{K}_{(m)}=\mathbf{K}_{(0)}\mathbf{T}_{m}$.
	When we refer to iteration $m$ of our approach we mean the transformation $\mathbf{T}_{m}$.
	Note that $\mathbf{T}_{m}$ corresponds to the linear kernel applied to the iterated projection of the empirical feature space -- $\mathbf{T}_{m}$ itself is not a projection.
	Alternatively it is possible to look at this directly in the feature space but this is beyond the scope of this work.
	Note that in contrast to \citet{perez-suay_fair_2017} we directly find a predictive direction as weights of a model and \citet{tan_learning_2020} instead aim to align different subspaces of the hypothesis space.
	\begin{algorithm}[t]
		\caption{Fair Kernel Decomposition}
		\label{alg:main_alg}
		\begin{algorithmic}[1]
			\STATE {\bfseries Input:} {$n \times n$ Kernel matrix $\mathbf{K}$, $n \times l$ protected attribute(s) $\mathbf{p}$, number of iterations $m$, regularization parameter $\tilde{\alpha}$.} 
			\STATE $\mathbf{K}_{0} \gets \mathbf{K}; \mathbf{T}^{\mathbf{K}_{0}},\mathbf{T}_{0}\gets\text{Id}$ 
			\FOR{$i \gets 1$ to $m$} 
			\STATE $\mathbf{B} \gets (\mathbf{K}_{(i-1)} +\tilde{\alpha} \textnormal{Id})^{-1}$ \COMMENT{Nystroem possible}
			\STATE $\tau_{norm} \gets (\mathbf{p}^\top \mathbf{B} \mathbf{K}_{(i-1)} \mathbf{B}  \mathbf{p})^{-1} $
			\STATE $\mathbf{M} \gets \mathbf{B} \mathbf{p}\tau_{norm}\mathbf{p}^\top\mathbf{B}$
			\STATE $\mathbf{T}^{\mathbf{K}_{i}}\gets\text{Id}-\mathbf{M}\mathbf{K}_{(i-1)}$ \COMMENT{Trf. from $\mathbf{K}_{(i-1)}$ to $\mathbf{K}_{(i)}$}
			\STATE $\mathbf{K}_{(i)} \gets \mathbf{K}_{(i-1)}\mathbf{T}^{\mathbf{K}_{(i)}}$
			\STATE $\mathbf{T}_{(i)} \gets \mathbf{T}_{(i-1)}\mathbf{T}^{\mathbf{K}_{(i)}}$ \COMMENT{Trf. from $\mathbf{K}_{(0)}$ to $\mathbf{K}_{(i)}$}
			\ENDFOR
			\STATE {\bfseries Output:} {Transformed positive semi-definite kernel matrix $\mathbf{K}_{(m)}$, transformation matrix $\mathbf{T}_{(m)}$.} 
		\end{algorithmic}
	\end{algorithm}
	
	\subsubsection{Choice of Algorithm}
	Even though we rely on a ridge-regression procedure for the null-space projection this method only directly affects the kernel so it can be plugged into any subsequent algorithm relying on a kernel matrix.
	In this work we focus on the two canonical classical approaches: Kernel Ridge Regression (KRR) and Support Vector Regression (SVR).
	
	\subsubsection{Computational Complexity and Nystroem}
	In terms of complexity, our approach requires to invert the $n \times n$ kernel matrix (usually $\mathcal{O}(n^3)$).
	This expensive operation can be avoided using the Nystroem approximation \cite{drineas_nystrom_2005} for the inverse $(\mathbf{K}_{(m-1)} +\tilde{\alpha} \textnormal{Id})^{-1}$ in \Cref{th:projection}.
	Further, computational complexity from matrix multiplications can be reduced by choosing an appropriate order for multiplication and by not storing the transformations $\mathbf{T}_{(i)}$ explicitly -- we provide the full details in \Cref{app:nystr} due to space restrictions.
	
	We evaluate the Nystroem inverse approximation in \Cref{sec:exp_res_nys}.
	However, a remaining limitation is that the kernel matrix is still explicitly stored in memory which quickly becomes intractable for large datasets as the kernel matrix requires $\mathcal{O}(n^2)$ storage -- this limitation is inherent to kernel methods but could be improved by aiming to directly approximate the kernel matrix instead of only its inverse.
	This direction for improvement is however left for future work.

	\section{Experimental Results}\label{sec:exp_res}
	All implementations as well as all experiments in this section are made publicly available on GitHub.\footnote{\url{https://github.com/Felix-St/FairKernelDecomposition}}
	\subsection{Real-world Datasets} \label{sec:datasets}
	We consider the following datasets:
	\begin{itemize}
		\item \textit{Communities \& Crimes:} This dataset is provided by \citet{redmond_data-driven_2002} and widely used in the continuous fairness setup.
		The premise is to predict the ``crime rate'' based on characteristics of a given community.
		The protected attribute is the ``percentage of black people''.
		\item  \textit{ACSIncome:} \citet{ding_retiring_2021} introduce novel datasets based on data from the ``American Community Survey'' and is distinguished by US state.
		The ``ACSIncome'' task aims to predict the income of a person based on education, work hours and other social characteristics.
		We consider ``age'' as the protected attribute.
		The size of the data varies considerably between different states, to render computations tractable we choose the smaller dataset corresponding to ``Montana''.
		\item \textit{ACSTravelTime:} This is also introduced by \citet{ding_retiring_2021} with the aim of predicting the ``time to commute'' based on social and personal characteristics.
		We consider ``age'' as protected attribute and also choose the state ``Montana''.
	\end{itemize}
	Unfortunately there is a lack of datasets for fairness in general and it is up for debate how sensible (see for example \citet{bao_it_2021}) existing and widely used datasets are -- an issue exacerbated by the ``continuous fairness'' setting. 
	However, the ACS data is commonly used in social fields (refer to \Cref{sec:soc_imp}).
	Each problem discussed above aims at possible use-cases in the real-world: city planners develop efficient traffic routing (``ACSTravelTime''), employers predict the salary of a new hire (``ACSIncome'') and the police allocate its forces to certain regions (``Crimes'').
	In all these cases laws govern that there should be no discrimination based on ``age'' and ``race'' present as continuous attributes.
	
	\begin{figure*}[t!]
		\centering
		\includegraphics[width=1\textwidth]{
			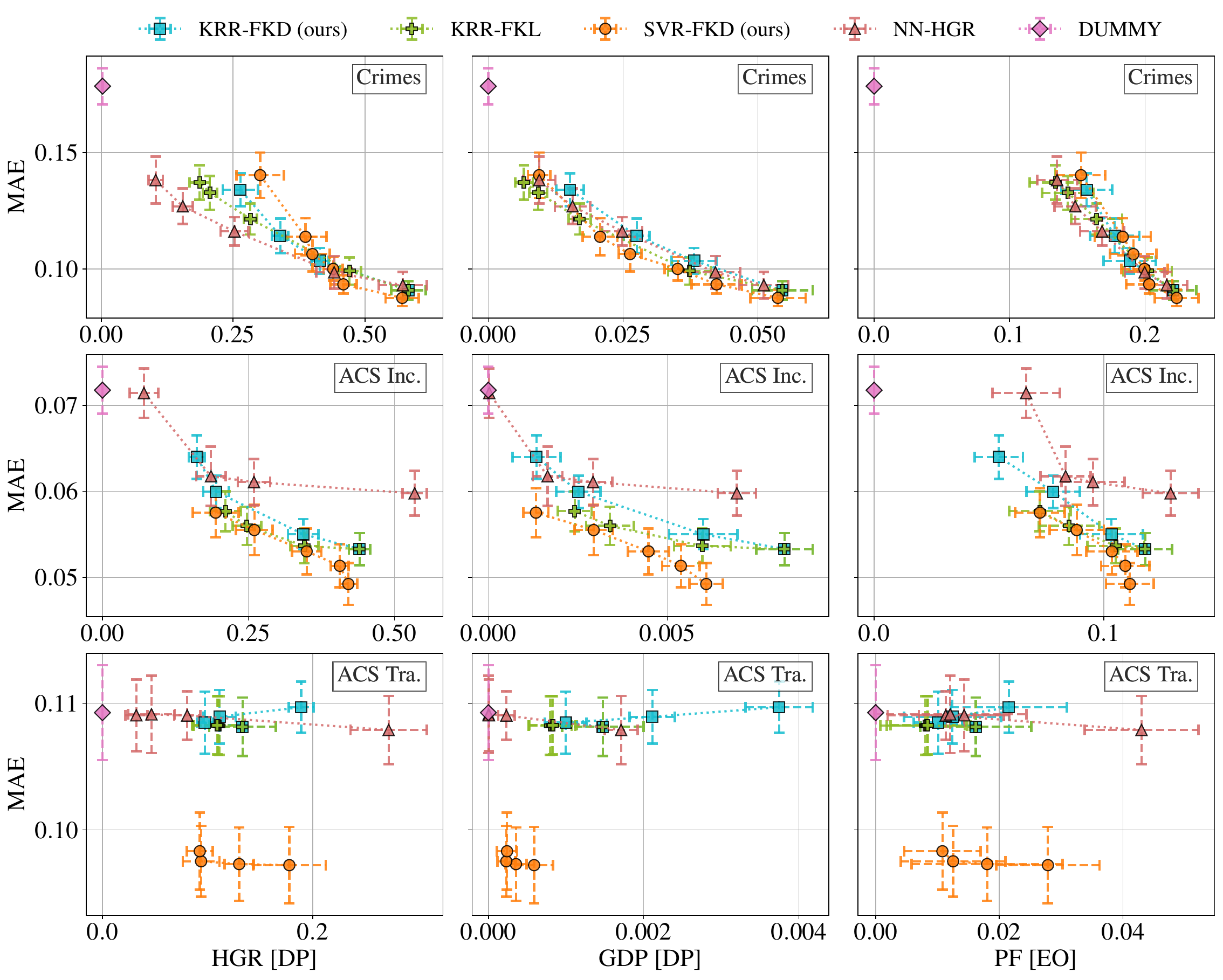}
		\caption{Pareto fronts for the MAE (y-axis, lower is better)  and three different fairness measures (x-axis, lower is better) reported with empirical standard deviations (error-bars). Comparison for multiple methods (marker, color) and datasets (boxes, one per row). Number of iterations $m$ for *-FKD, regularization $\mu$ for *-FKL and HGR penalty $\lambda$ are found in \Cref{sec:add_exp_det}.}
		\label{fig:exp_eval}
	\end{figure*}
	
	\subsubsection{Methods} 
	We compare our approach in combination with KRR and SVR which we term ``KRR/SVR-FKD'' where ``FKD'' stands for ``Fair Kernel Decomposition''.
	To benchmark against another fairness approach for KRR we compare against \citet{perez-suay_fair_2017} where we denote their approach as ``KRR-FKL''.
	Additionally we benchmark against the HGR penalized neural network from \citet{mary_fairness-aware_2019} (which we denote ``NN-HGR'') as well as a dummy classifier that simply predicts the training's targets mean.
	
	We do not compare against \citet{perez-suay_fair_2023} as they require label information for test data and found the most recent approach \cite{kong_fair_2025} to not outperform the other baselines (see \Cref{sec:addexp} for a comparison).

	\subsubsection{Parameters} \label{sec:exp_param}
	For the kernel methods we use the rbf-kernel (see \cref{eq:krbf}).
	For every dataset we first run a grid-search on the non-fair base methods (KRR, SVR) to find their optimal default hyper-parameters.
	For our approach we have an additional parameter $\tilde{\alpha}$ for the KRR used inside our decomposition approach which we set heuristically (for an analysis see \Cref{sec:infl_alpha}).
	Otherwise we run our evaluation for a range of different regularization parameters (penalty coefficient and number of iterations). Details about the used parameters and more are given in \Cref{sec:add_exp_det}.
	For ``NN-HGR'' we choose the architecture reported in the original contribution \cite{mary_fairness-aware_2019}.\footnote{Note that this architecture might not be optimal for every dataset but a fair comparison of completely different approaches is not straight-forward for multi-objective problems due to the complexity of hyperparameter optimization. \Cref{sec:addexp} contains results with a larger network which does not improve performance.}
	
	\begin{figure*}[t]
		\centering
		\includegraphics[width=1\textwidth]{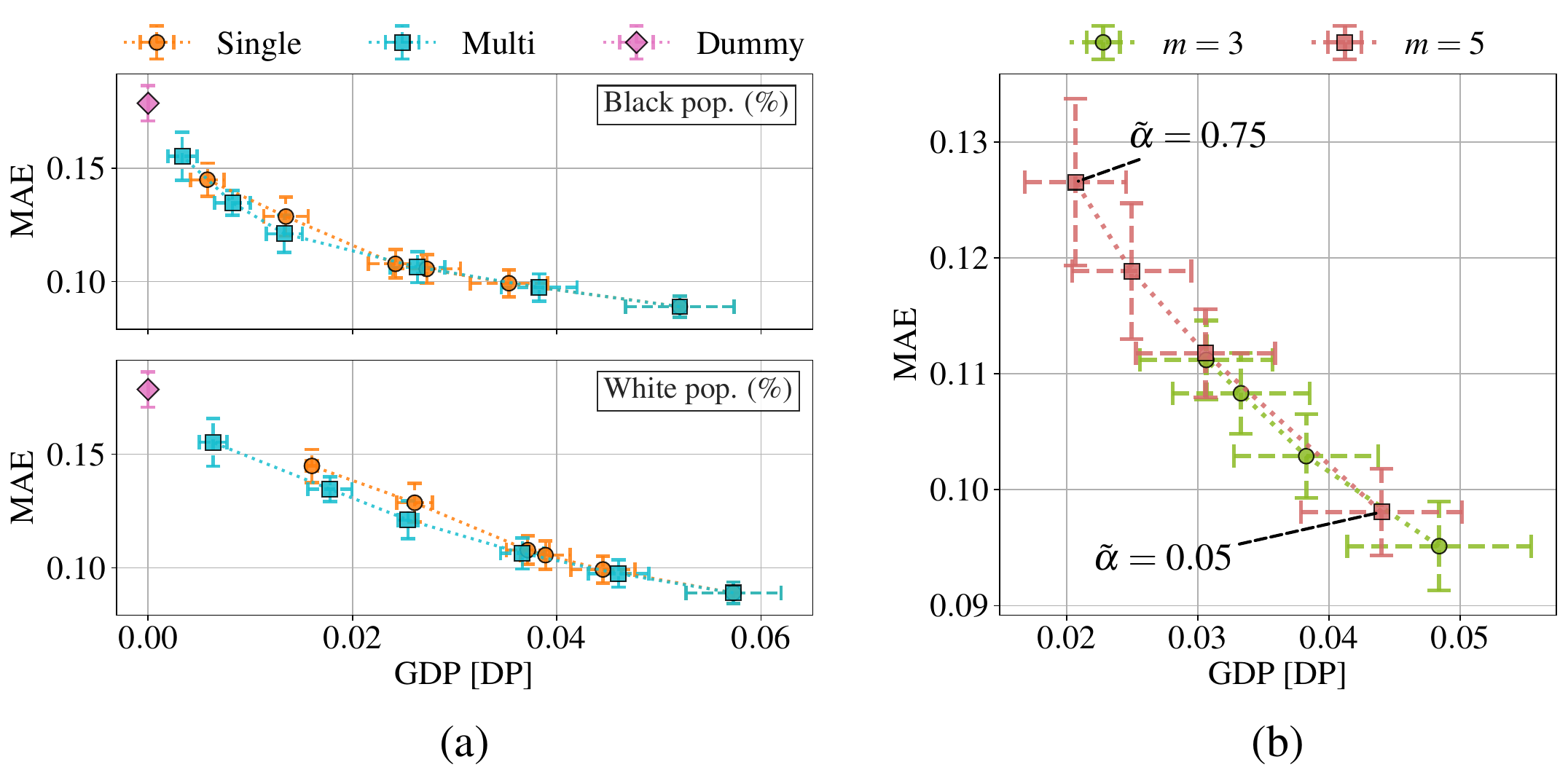}
		\caption{Other experiments on the ``Crimes'' dataset. (a): Performance (MAE, y-axis, lower is better) of our ``SVR-FKD'' considering both the white and black population percentage (pop. perc.) as protected (multi, squares, cyan, $m\in\{0, 5, 12, 25, 35, 45\}$) and only the black pop. perc. (single, circles, orange, $m\in\{0, 10, 35, 50, 70, 85\}$). GDP fairness (x-axis, lower is better) with respect to the black pop. perc. (top) and to the white pop. perc. (bottom)  (b): Influence of $\tilde{\alpha}$ on the performance (MAE, y-axis, lower is better) of our ``KRR-FKD'' for fixed parameters and iterations $m\in\{3, 5\}$ (color and marker). GDP fairness (x-axis, lower is better) w.r.t the black pop. perc.}
		\label{fig:other_exps}
	\end{figure*}
	
	\subsubsection{Measures}\label{sec:exp_meas} All reported results are obtained by a 5-fold cross validation.
	We report the mean absolute error (MAE) for target evaluation and use the following measures for fairness (see \cref{sec:fair_measures_reg} for the corresponding explanation and references): to evaluate for demographic parity we choose HGR (denoted HGR [DP]) and GDP (denoted GDP [DP]) whereas for equalized odds\footnote{For pairwise fairness this is identical to equal opportunity.} we use pairwise fairness (denoted PF [EO]).
	The empirical standard deviation across all runs is provided for every measure and shown as error-bars. 
	
	\subsubsection{Results} 
	Evaluations concerning fair machine learning require considering a trade-off between predictive performance and fairness -- hence we are interested in the corresponding (approximate) pareto fronts which are visualized in \Cref{fig:exp_eval}.
	Note that each fairness measure is taken at the same time for the same model i.e. every row evaluates the same models and same predictions but with different fairness measures.
	
	On the \textit{Crimes} dataset differences can be observed for the HGR measure where ``NN-HGR'' and ``KRR-FKL'' outperform our approach in case of strong regularization despite that this margin is interestingly much closer for the other measures and our ``SVR-FKD'' outperforms the other approaches slightly for weak regularization being more pronounced for the GDP score.
	
	On the \textit{ACSIncome} dataset our ``SVR-FKD'' generally outperforms all other approaches particularly pronounced for GDP, whereas our ``KRR-FKD'' is outperformed by ``KRR-FKL'' both in turn providing better trade-offs than ``NN-HGR'' even for the HGR score.
	
	On the \textit{ACSTravelTime} dataset we observe that most methods -- even if unregularized -- perform equally to the dummy regressor.
	This is not the case for ``SVR-FKD'' that also achieves considerable improvements in terms of fairness.
	
	In conclusion, we can see quite heterogeneous results -- emphasizing that to achieve fairness the right method has to be chosen for the right task-at-hand.
	Our newly introduced methods show competitive performance across multiple measures and datasets with best performance for ``SVR-FKD'' thus providing an useful new method for fair ML.
	
	\begin{figure*}[ht!]
		\centering
		\includegraphics[width=1\textwidth]{
			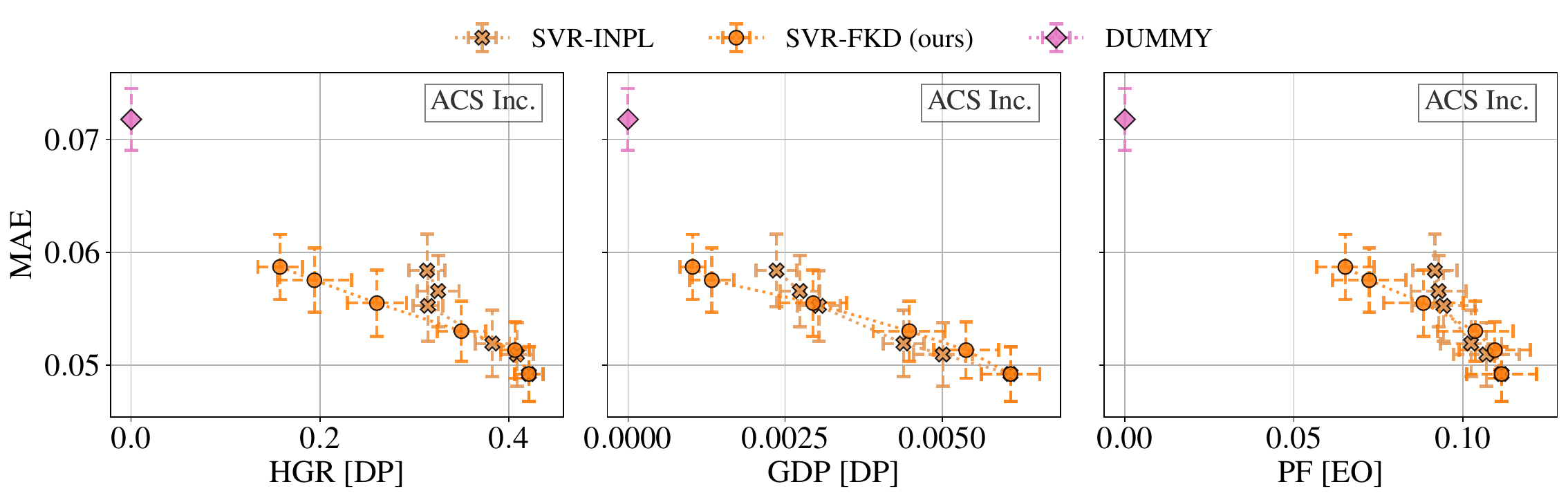}
		\caption{Same experimental setting as \Cref{fig:exp_eval}. Comparison between SVR-FKD and linear-only removal (SVR-INPL).}
		\label{fig:linear_comparison}
	\end{figure*}
	
	\subsection{Multiple Protected Attributes}
	Next to predictive performance we so-far optimized for fairness with respect to a single attribute.
	Of course it is possible we either fail to ensure fairness with respect to another attribute or maybe even introduce unwanted biases by discriminating against other groups.
	
	\subsubsection{Experimental Setup}
	To analyze how our approach compares when considering multiple protected attributes we again utilize the ``Crimes'' dataset.
	Next to the ``black population percentage'' (pop. perc.) it also includes the ``white population percentage''. 
	We compare the performance of our ``SVR-FKD'' once being trained to protect both attributes (``multi'') and once only for the ``black population percentage'' (``single'') and then measuring fairness with respect to each attribute.
	Otherwise the same experimental setup is used as previously.

	\subsubsection{Result} 
	\Cref{fig:other_exps} (a) shows how in both cases fairness improves for both groups which is sensible as there is a clear connection between the attributes.
	For the black pop. perc. we observe very similar fairness-scores but with slight differences for high regularization -- this effect is more pronounced for the white pop. perc. where we do observe more noticeable differences as the multi-protected approach achieves improved trade-offs -- even though the margin remains slim.
	The multi-protected setting is of interest for further investigation with wider availability of appropriate datasets in the future.

	\subsection{Influence of $\tilde{\alpha}$}\label{sec:infl_alpha}
	A key parameter in our approach is the regularization $\tilde{\alpha}$ for the ridge-regression that computes the weight used for the null-space projection.
	Usually $\tilde{\alpha}$ controls for overfitting, however we might benefit from only removing very specific information about the protected attribute -- rendering the interpretation more difficult.
	
	We analyze this by measuring the performance of ``KRR-FKD'' for a fixed number of iterations on the ``Crimes'' dataset whilst varying $\tilde{\alpha}$ -- this is shown in \Cref{fig:other_exps} (b).
	The parameter plays a crucial role: the higher $\tilde{\alpha}$ the larger the change to predictive performance i.e. smaller values correspond to a more fine-grained removal of information.
	
	This suggests using a smaller value and control the degree of regularization by choosing an appropriate $m$ -- the precise choice, however, as always remains dataset specific.

	\subsection{Comparison to Linear Removal}
	Evidently, if the relationship between the data and the protected attribute is linear it is fully sufficient to rely on linear removal.
	This poses the question of the advantage provided by our approach if this relationship is indeed non-linear which we investigate in the following.
	
	\subsubsection{Experimental Setup}
	This is done on \textit{ACSIncome} by comparing our ``SVR-FKD'' to a linear baseline (``SVR-INPL'') which implements the same
	idea of null-space projections, up to specific modifications to improve numerical stability.
	Crucially the projection is applied directly to the data --- the projection is thus linear --- then a (non-linear) Kernel SVR is applied to the projected data.
	This differs from using ``SVR-FKD'' in conjunction with a linear kernel as this happens in the same kernel space; thus using a linear kernel in our projection would also force us to use a linear kernel in the predictive model.

	Both approaches use the same parameters as listed in \Cref{sec:add_exp_det} with the exception that we use some additional iterations, i.e. $m\in \{0,60,100,160,180,200\}$ for ``SVR-INPL'' and $m\in \{0,45,60,80,140\}$ for ``SVR-FKD''.
	
	\subsubsection{Result}
	\Cref{fig:linear_comparison} highlights how at the beginning ``SVR-INPL'' is able to remove the linear dependencies even though the improvements in fairness for the same $m$ are much smaller than for ``SVR-FKD'', indicating that the amount of information removed about the protected attribute is smaller at each step for ``SVR-INPL''. 
	At a certain point, the MAE keeps increasing without improvements in fairness for ``SVR-INPL'' (even with a larger number of iterations e.g., 160, 180, 200) whereas our approach, despite also increasing in MAE, keeps improving in fairness as the linear approach fails to capture the non-linear dependencies. 
	This effect is significantly more pronounced for HGR and PF than for GDP.

	\subsection{Nystroem Inverse Approximation}\label{sec:exp_res_nys}
	To address the scalability issues we test the performance difference if we rely on the Nystroem inverse approximation.
	We use the same settings as before, focus on the ``Crimes'' dataset and show the performance for different levels of approximation by using different percentages of the original number of components for the approximation.
	
	\Cref{fig:nystroem} clearly demonstrates qualitatively close results even if some differences persist, the overall trend is identical.
	It is of great interest to investigate the limits of the scalability in future -- currently Nystroem is used for the inverse of the kernel matrix, a possible improvement would be to perform the projections directly in the reduced representation or by using completely different approaches for the kernel approximation.

	\begin{figure}[t]
		\centering
		\includegraphics[width=1\columnwidth]{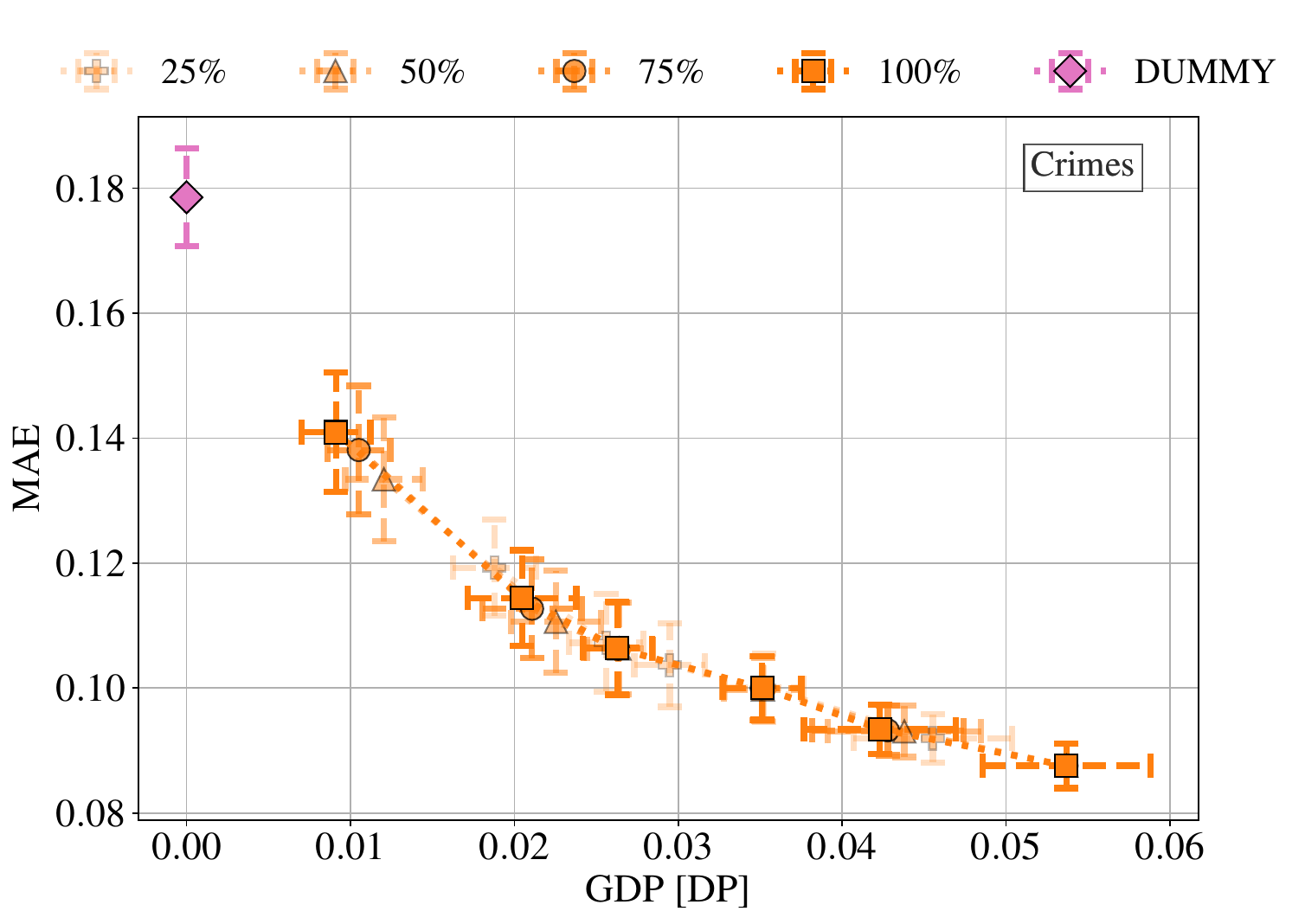} 
		\caption{Nystroem approximation with different percentages (denoted by transparency and marker) of components for ``SVR-FKD''. ``Crimes''  with the same experimental setting as before.}
		\label{fig:nystroem}
	\end{figure}

	\section{Conclusion}\label{sec:disc}
	This work introduces an approach that allows the extension of null-space projections for fairness to the more general case of 
	kernel methods for regression with continuous protected attributes.
	We term this setting ``continuous fairness'' which is often neglected within the ML community (\Cref{sec:soc_imp}).
	Our approach is model and task agnostic as it directly depends on the kernel matrix allowing for a natural extension to unseen data. 
	
	We empirically demonstrate that our approach is suited for the application of different regression strategies using the transformed kernel matrix.
	This yields competitive performance and, in the case of our newly introduced approach for fair Support Vector Regression with continuous protected attributes (``SVR-FKD''), improved performance.
	
	\subsection{Limitations and Future Work}
	Some limitations are inherent to our approach, such as the iterative and kernel based nature: for $m$ iterations and $n$ samples we require $\mathcal{O}(m\cdot n^3)$ operations.
	We empirically found that this can be significantly reduced by the Nystroem method for matrix inversion without a significant loss in performance: however we still require $\mathcal{O}(n^2)$ in storage thus limiting scalability.
	On this end, it is of interest to analyze the possibility of going beyond approximating the inverse of the kernel matrix by directly performing the null-space projections in the reduced Nystroem representation or by using a different form of kernel approximation altogether (such as random Fourier features \cite{rahimi_random_2007}).
	
	Future work might focus on analyzing the performance on other \textit{kernel methods} we did not investigate in this work such as Gaussian Processes.
	
	Another interesting extension and modification of our approach would be to other \textit{tasks} such as classification with a continuous protected attribute, regression with a discrete protected attribute or in other domains such as privacy.
	
	A future analysis of the alternative viewpoint of direct projection in the feature space is another direction of interest.
	
	\section*{Acknowledgements}
	Funding in the scope of the ERC Synergy Grant ``Water-Futures'' No. 951424 is gratefully acknowledged.

	\section*{Impact Statement}
	This paper presents work whose goal is to advance the field of (Algorithmic) Fair Machine Learning.
	This includes systems that provide automatic decisions with potential impact on humans -- we do not argue in favor of, or against the use of such automated decision systems. But reality is that such ML systems are in use and that they -- as do ML systems in general -- already affect society in a variety of ways.
	Further, as with most fairness approaches it is possible to misuse the introduced algorithm -- for example to wrongfully remove the contributions of marginalized groups.

	\bibliography{icml2026}

@inproceedings{bao_it_2021,
  title = {It' s {{COMPASlicated}}: {{The Messy Relationship}} between {{RAI Datasets}} and {{Algorithmic Fairness Benchmarks}}},
  booktitle = {Proceedings of the {{Neural Information Processing Systems Track}} on {{Datasets}} and {{Benchmarks}}},
  author = {Bao, Michelle and Zhou, Angela and Zottola, Samantha and Brubach, Brian and Brubach, Brian and Desmarais, Sarah and Horowitz, Aaron and Lum, Kristian and Venkatasubramanian, Suresh},
  editor = {Vanschoren, J. and Yeung, S.},
  year = 2021,
  volume = {1}
}

@inproceedings{binns_apparent_2020,
  title = {On the Apparent Conflict between Individual and Group Fairness},
  booktitle = {Proceedings of the 2020 {{Conference}} on {{Fairness}}, {{Accountability}}, and {{Transparency}}},
  author = {Binns, Reuben},
  year = 2020,
  series = {{{FAT}}* '20},
  pages = {514--524},
  publisher = {Association for Computing Machinery},
  address = {New York, NY, USA},
  abstract = {A distinction has been drawn in fair machine learning research between 'group' and 'individual' fairness measures. Many technical research papers assume that both are important, but conflicting, and propose ways to minimise the trade-offs between these measures. This paper argues that this apparent conflict is based on a misconception. It draws on discussions from within the fair machine learning research, and from political and legal philosophy, to argue that individual and group fairness are not fundamentally in conflict. First, it outlines accounts of egalitarian fairness which encompass plausible motivations for both group and individual fairness, thereby suggesting that there need be no conflict in principle. Second, it considers the concept of individual justice, from legal philosophy and jurisprudence, which seems similar but actually contradicts the notion of individual fairness as proposed in the fair machine learning literature. The conclusion is that the apparent conflict between individual and group fairness is more of an artefact of the blunt application of fairness measures, rather than a matter of conflicting principles. In practice, this conflict may be resolved by a nuanced consideration of the sources of 'unfairness' in a particular deployment context, and the carefully justified application of measures to mitigate it.},
  isbn = {978-1-4503-6936-7},
  keywords = {discrimination,fairness,individual fairness,justice,machine learning,statistical parity}
}

@misc{carrizosa_wasserstein_2023,
  title = {Wasserstein {{SVM}}: {{Support}} Vector Machines Made Fair},
  author = {Carrizosa, Emilio and Halskov, Thomas and Romero Morales, Dolores},
  year = 2023,
  month = jun
}

@article{chu_age-related_2023,
  title = {Age-Related Bias and Artificial Intelligence: A Scoping Review},
  author = {Chu, Charlene H. and {Donato-Woodger}, Simon and Khan, Shehroz S. and Nyrup, Rune and Leslie, Kathleen and Lyn, Alexandra and Shi, Tianyu and Bianchi, Andria and Rahimi, Samira Abbasgholizadeh and Grenier, Amanda},
  year = 2023,
  month = aug,
  journal = {Humanities and Social Sciences Communications},
  volume = {10},
  number = {1},
  pages = {510},
  issn = {2662-9992},
  abstract = {There are widespread concerns about bias and discriminatory output related to artificial intelligence (AI), which may propagate social biases and disparities. Digital ageism refers to ageism reflected design, development, and implementation of AI systems and technologies and its resultant data. Currently, the prevalence of digital ageism and the sources of AI bias are unknown. A scoping review informed by the Arksey and O'Malley methodology was undertaken to explore age-related bias in AI systems, identify how AI systems encode, produce, or reinforce age-related bias, what is known about digital ageism, and the social, ethical and legal implications of age-related bias. A comprehensive search strategy that included five electronic bases and grey literature sources including legal sources was conducted. A framework of machine learning biases spanning from data to user by Mehrabi et al. is used to present the findings (Mehrabi et al. 2021). The academic search resulted in 7595 articles that were screened according to the inclusion criteria, of which 307 were included for full-text screening, and 49 were included in this review. The grey literature search resulted in 2639 documents screened, of which 235 were included for full text screening, and 25 were found to be relevant to the research questions pertaining to age and AI. As a result, a total of 74 documents were included in this review. The results show that the most common AI applications that intersected with age were age recognition and facial recognition systems. The most frequent machine learning algorithms used were convolutional neural networks and support vector machines. Bias was most frequently introduced in the early `data to algorithm' phase in machine learning and the `algorithm to user' phase specifically with representation bias (n\,=\,33) and evaluation bias (n\,=\,29), respectively (Mehrabi et al. 2021). The review concludes with a discussion of the ethical implications for the field of AI and recommendations for future research.}
}

@article{dietrich_overview_2015,
  title = {An Overview of Null Space Projections for Redundant, Torque-Controlled Robots},
  author = {Dietrich, Alexander and Ott, Christian and {Albu-Sch{\"a}ffer}, Alin},
  year = 2015,
  journal = {The International Journal of Robotics Research},
  volume = {34},
  number = {11},
  pages = {1385--1400},
  abstract = {One step on the way to approach human performance in robotics is to provide joint torque sensing and control for better interaction capabilities with the environment, and a large number of actuated degrees of freedom (DOFs) for improved versatility. However, the increasing complexity also raises the question of how to resolve the kinematic redundancy which is a direct consequence of the large number of DOFs. Here we give an overview of the most practical and frequently used torque control solutions based on null space projections. Two fundamental structures of task hierarchies are reviewed and compared, namely the successive and the augmented method. Then the projector itself is investigated in terms of its consistency. We analyze static, dynamic, and the new concept of stiffness consistency. In the latter case, stiffness information is used in the pseudoinversion instead of the inertia matrix. In terms of dynamic consistency, we generalize the weighting matrix from the classical operational space approach and show that an infinite number of weighting matrices exist to obtain dynamic consistency. In this context we also analyze another dynamically consistent null space projector with slightly different structure and properties. The redundancy resolutions are finally compared in several simulations and experiments. A thorough discussion of the theoretical and empirical results completes this survey.}
}

@inproceedings{ding_retiring_2021,
  title = {Retiring {{Adult}}: {{New Datasets}} for {{Fair Machine Learning}}},
  booktitle = {Advances in {{Neural Information Processing Systems}}},
  author = {Ding, Frances and Hardt, Moritz and Miller, John and Schmidt, Ludwig},
  editor = {Ranzato, M. and Beygelzimer, A. and Dauphin, Y. and Liang, P. S. and Vaughan, J. Wortman},
  year = 2021,
  volume = {34},
  pages = {6478--6490},
  publisher = {Curran Associates, Inc.}
}

@inproceedings{donini_empirical_2018,
  title = {Empirical Risk Minimization under Fairness Constraints},
  booktitle = {Proceedings of the 32nd International Conference on Neural Information Processing Systems},
  author = {Donini, Michele and Oneto, Luca and {Ben-David}, Shai and {Shawe-Taylor}, John and Pontil, Massimiliano},
  year = 2018,
  series = {{{NIPS}}'18},
  pages = {2796--2806},
  publisher = {Curran Associates Inc.},
  address = {Montr\'eal, Canada and Red Hook, NY, USA},
  abstract = {We address the problem of algorithmic fairness: ensuring that sensitive information does not unfairly influence the outcome of a classifier. We present an approach based on empirical risk minimization, which incorporates a fairness constraint into the learning problem. It encourages the conditional risk of the learned classifier to be approximately constant with respect to the sensitive variable. We derive both risk and fairness bounds that support the statistical consistency of our methodology. We specify our approach to kernel methods and observe that the fairness requirement implies an orthogonality constraint which can be easily added to these methods. We further observe that for linear models the constraint translates into a simple data preprocessing step. Experiments indicate that the method is empirically effective and performs favorably against state-of-the-art approaches.}
}

@article{drineas_nystrom_2005,
  title = {On the {{Nystrom Method}} for {{Approximating}} a {{Gram Matrix}} for {{Improved Kernel-Based Learning}}},
  author = {Drineas, Petros and Mahoney, Michael W.},
  year = 2005,
  journal = {Journal of Machine Learning Research},
  volume = {6},
  number = {72},
  pages = {2153--2175}
}

@book{european_commission_ethics_2019,
  title = {Ethics Guidelines for Trustworthy {{AI}}},
  author = {European Commission},
  year = 2019,
  publisher = {Publications Office}
}

@misc{european_union_charter_2012,
  title = {Charter of {{Fundamental Rights}} of the {{European Union}}},
  author = {European Union},
  year = 2012,
  month = oct,
  number = {C 326/391}
}

@inproceedings{giuliani_generalized_2023,
  title = {Generalized {{Disparate Impact}} for {{Configurable Fairness Solutions}} in {{ML}}},
  booktitle = {Proceedings of the 40th {{International Conference}} on {{Machine Learning}}},
  author = {Giuliani, Luca and Misino, Eleonora and Lombardi, Michele},
  editor = {Krause, Andreas and Brunskill, Emma and Cho, Kyunghyun and Engelhardt, Barbara and Sabato, Sivan and Scarlett, Jonathan},
  year = 2023,
  month = jul,
  series = {Proceedings of {{Machine Learning Research}}},
  volume = {202},
  pages = {11443--11458},
  publisher = {PMLR},
  abstract = {We make two contributions in the field of AI fairness over continuous protected attributes. First, we show that the Hirschfeld-Gebelein-Renyi (HGR) indicator (the only one currently available for such a case) is valuable but subject to a few crucial limitations regarding semantics, interpretability, and robustness. Second, we introduce a family of indicators that are: 1) complementary to HGR in terms of semantics; 2) fully interpretable and transparent; 3) robust over finite samples; 4) configurable to suit specific applications. Our approach also allows us to define fine-grained constraints to permit certain types of dependence and forbid others selectively. By expanding the available options for continuous protected attributes, our approach represents a significant contribution to the area of fair artificial intelligence.}
}

@inproceedings{gretton_kernel_2007,
  title = {A {{Kernel Statistical Test}} of {{Independence}}},
  booktitle = {Advances in {{Neural Information Processing Systems}}},
  author = {Gretton, Arthur and Fukumizu, Kenji and Teo, Choon and Song, Le and Sch{\"o}lkopf, Bernhard and Smola, Alex},
  editor = {Platt, J. and Koller, D. and Singer, Y. and Roweis, S.},
  year = 2007,
  volume = {20},
  publisher = {Curran Associates, Inc.}
}

@article{grunewalder_oblivious_2021,
  title = {Oblivious Data for Fairness with Kernels},
  author = {Gr{\"u}new{\"a}lder, Steffen and Khaleghi, Azadeh},
  year = 2021,
  journal = {Journal of Machine Learning Research},
  volume = {22},
  number = {208},
  pages = {1--36}
}

@inproceedings{hort_privileged_2022,
  title = {Privileged and {{Unprivileged Groups}}: {{An Empirical Study}} on the {{Impact}} of the {{Age Attribute}} on {{Fairness}}},
  booktitle = {2022 {{IEEE}}/{{ACM International Workshop}} on {{Equitable Data}} \& {{Technology}} ({{FairWare}})},
  author = {Hort, Max and Sarro, Federica},
  year = 2022,
  pages = {17--24},
  keywords = {bias,binary classification,Correlation,Decision making,Machine learning,Measurement,Sociology,software fairness,Support vector machines,Training}
}

@inproceedings{jiang_generalized_2022,
  title = {Generalized {{Demographic Parity}} for {{Group Fairness}}},
  booktitle = {International {{Conference}} on {{Learning Representations}}},
  author = {Jiang, Zhimeng and Han, Xiaotian and Fan, Chao and Yang, Fan and Mostafavi, Ali and Hu, Xia},
  year = 2022
}

@inproceedings{kleinberg_inherent_2017,
  title = {Inherent Trade-Offs in the Fair Determination of Risk Scores},
  booktitle = {8th Innovations in Theoretical Computer Science Conference ({{ITCS}} 2017)},
  author = {Kleinberg, Jon and Mullainathan, Sendhil and Raghavan, Manish},
  editor = {Papadimitriou, Christos H.},
  year = 2017,
  series = {Leibniz International Proceedings in Informatics (Lipics)},
  volume = {67},
  pages = {43:1--43:23},
  publisher = {Schloss Dagstuhl -- Leibniz-Zentrum f\"ur Informatik},
  address = {Dagstuhl, Germany},
  isbn = {978-3-95977-029-3}
}

@article{kong_fair_2025,
  title = {Fair {{Representation Learning}} for {{Continuous Sensitive Attributes Using Expectation}} of {{Integral Probability Metrics}}},
  author = {Kong, Insung and Kim, Kunwoong and Kim, Yongdai},
  year = 2025,
  month = may,
  journal = {IEEE Transactions on Pattern Analysis and Machine Intelligence},
  volume = {47},
  number = {5},
  pages = {3784--3795},
  issn = {1939-3539},
  urldate = {2025-06-30},
  abstract = {AI fairness, also known as algorithmic fairness, aims to ensure that algorithms operate without bias or discrimination towards any individual or group. Among various AI algorithms, the Fair Representation Learning (FRL) approach has gained significant interest in recent years. However, existing FRL algorithms have a limitation: they are primarily designed for categorical sensitive attributes and thus cannot be applied to continuous sensitive attributes, such as age or income. In this paper, we propose an FRL algorithm for continuous sensitive attributes. First, we introduce a measure called the Expectation of Integral Probability Metrics (EIPM) to assess the fairness level of representation space for continuous sensitive attributes. We demonstrate that if the distribution of the representation has a low EIPM value, then any prediction head constructed on the top of the representation become fair, regardless of the selection of the prediction head. Furthermore, EIPM possesses a distinguished advantage in that it can be accurately estimated using our proposed estimator with finite samples. Based on these properties, we propose a new FRL algorithm called Fair Representation using EIPM with MMD (FREM). Experimental evidences show that FREM outperforms other baseline methods.},
  keywords = {Artificial intelligence,Economic indicators,Fairness,integral probability metric,Kernel,Measurement,Prediction algorithms,Predictive models,representation learning,Representation learning,Training,Training data,Vectors}
}

@article{kwak_nonlinear_2013,
  title = {Nonlinear {{Projection Trick}} in {{Kernel Methods}}: {{An Alternative}} to the {{Kernel Trick}}},
  author = {Kwak, Nojun},
  year = 2013,
  journal = {IEEE Transactions on Neural Networks and Learning Systems},
  volume = {24},
  number = {12},
  pages = {2113--2119},
  keywords = {Dimensionality reduction,Eigenvalues and eigenfunctions,Kernel,kernel methods,kernel PCA (KPCA),KPCA-L1,nonlinear projection trick,Principal component analysis,support vector machines,Support vector machines,Training,Training data,Vectors}
}

@inproceedings{lai_fairicp_2025,
  title = {{{FairICP}}: {{Encouraging}} Equalized Odds via Inverse Conditional Permutation},
  booktitle = {Proceedings of the 42nd International Conference on Machine Learning},
  author = {Lai, Yuheng and Guan, Leying},
  editor = {Singh, Aarti and Fazel, Maryam and Hsu, Daniel and {Lacoste-Julien}, Simon and Berkenkamp, Felix and Maharaj, Tegan and Wagstaff, Kiri and Zhu, Jerry},
  year = 2025,
  month = jul,
  series = {Proceedings of Machine Learning Research},
  volume = {267},
  pages = {32228--32245},
  publisher = {PMLR},
  abstract = {\textexclamdown em\textquestiondown Equalized odds\textexclamdown/em\textquestiondown, an important notion of algorithmic fairness, aims to ensure that sensitive variables, such as race and gender, do not unfairly influence the algorithm's prediction when conditioning on the true outcome. Despite rapid advancements, current research primarily focuses on equalized odds violations caused by a single sensitive attribute, leaving the challenge of simultaneously accounting for multiple attributes under-addressed. We bridge this gap by introducing an in-processing fairness-aware learning approach, FairICP, which integrates adversarial learning with a novel inverse conditional permutation scheme. FairICP offers a flexible and efficient scheme to promote equalized odds under fairness conditions described by complex and multi-dimensional sensitive attributes. The efficacy and adaptability of our method are demonstrated through both simulation studies and empirical analyses of real-world datasets.}
}

@inproceedings{mary_fairness-aware_2019,
  title = {Fairness-{{Aware Learning}} for {{Continuous Attributes}} and {{Treatments}}},
  booktitle = {Proceedings of the 36th {{International Conference}} on {{Machine Learning}}},
  author = {Mary, Jeremie and Calauz{\`e}nes, Cl{\'e}ment and Karoui, Noureddine El},
  year = 2019,
  month = may,
  pages = {4382--4391},
  publisher = {PMLR},
  issn = {2640-3498},
  urldate = {2025-06-12},
  abstract = {We address the problem of algorithmic fairness: ensuring that the outcome of a classifier is not biased towards certain values of sensitive variables such as age, race or gender. As common fairness metrics can be expressed as measures of (conditional) independence between variables, we propose to use the R\'enyi maximum correlation coefficient to generalize fairness measurement to continuous variables. We exploit Witsenhausen's characterization of the R\'enyi correlation coefficient to propose a differentiable implementation linked to {$f$}ff-divergences. This allows us to generalize fairness-aware learning to continuous variables by using a penalty that upper bounds this coefficient. Theses allows fairness to be extented to variables such as mixed ethnic groups or financial status without thresholds effects. This penalty can be estimated on mini-batches allowing to use deep nets. Experiments show favorable comparisons to state of the art on binary variables and prove the ability to protect continuous ones},
  langid = {english}
}

@inproceedings{narasimhan_pairwise_2020,
  title = {Pairwise Fairness for Ranking and Regression},
  booktitle = {33rd {{AAAI}} Conference on Artificial Intelligence},
  author = {Narasimhan, Harikrishna and Cotter, Andy and Gupta, Maya and Wang, Serena Lutong},
  year = 2020,
  month = jun
}

@inproceedings{olfat_spectral_2018,
  title = {Spectral {{Algorithms}} for {{Computing Fair Support Vector Machines}}},
  booktitle = {Proceedings of the {{Twenty-First International Conference}} on {{Artificial Intelligence}} and {{Statistics}}},
  author = {Olfat, Mahbod and Aswani, Anil},
  editor = {Storkey, Amos and {Perez-Cruz}, Fernando},
  year = 2018,
  month = apr,
  series = {Proceedings of {{Machine Learning Research}}},
  volume = {84},
  pages = {1933--1942},
  publisher = {PMLR},
  abstract = {Classifiers and rating scores are prone to implicitly codifying biases, which may be present in the training data, against protected classes (i.e., age, gender, or race). So it is important to understand how to design classifiers and scores that prevent discrimination in predictions. This paper develops computationally tractable algorithms for designing accurate but fair support vector machines (SVM's). Our approach imposes a constraint on the covariance matrices conditioned on each protected class, which leads to a nonconvex quadratic constraint in the SVM formulation. We develop iterative algorithms to compute fair linear and kernel SVM's, which solve a sequence of relaxations constructed using a spectral decomposition of the nonconvex constraint. Its effectiveness in achieving high prediction accuracy while ensuring fairness is shown through numerical experiments on several data sets.}
}

@inproceedings{oneto_general_2020,
  title = {General Fair Empirical Risk Minimization},
  booktitle = {2020 International Joint Conference on Neural Networks ({{IJCNN}})},
  author = {Oneto, Luca and Donini, Michele and Pontil, Massimiliano},
  year = 2020,
  pages = {1--8},
  keywords = {Algorithmic Fairness,Benchmark testing,Kernel,Kernel Methods,Machine Learning,Predictive models,Regression,Risk management,Standards,Supervised learning,Task analysis}
}

@article{pedregosa_scikit-learn_2011,
  title = {Scikit-Learn: {{Machine Learning}} in {{Python}}},
  author = {Pedregosa, F. and Varoquaux, G. and Gramfort, A. and Michel, V. and Thirion, B. and Grisel, O. and Blondel, M. and Prettenhofer, P. and Weiss, R. and Dubourg, V. and Vanderplas, J. and Passos, A. and Cournapeau, D. and Brucher, M. and Perrot, M. and Duchesnay, E.},
  year = 2011,
  journal = {Journal of Machine Learning Research},
  volume = {12},
  pages = {2825--2830}
}

@inproceedings{perez-suay_fair_2017,
  title = {Fair {{Kernel Learning}}},
  booktitle = {Machine {{Learning}} and {{Knowledge Discovery}} in {{Databases}}},
  author = {{P{\'e}rez-Suay}, Adri{\'a}n and Laparra, Valero and {Mateo-Garc{\'i}a}, Gonzalo and {Mu{\~n}oz-Mar{\'i}}, Jordi and {G{\'o}mez-Chova}, Luis and {Camps-Valls}, Gustau},
  editor = {Ceci, Michelangelo and Hollm{\'e}n, Jaakko and Todorovski, Ljup{\v c}o and Vens, Celine and D{\v z}eroski, Sa{\v s}o},
  year = 2017,
  pages = {339--355},
  publisher = {Springer International Publishing},
  address = {Cham},
  abstract = {New social and economic activities massively exploit big data and machine learning algorithms to do inference on people's lives. Applications include automatic curricula evaluation, wage determination, and risk assessment for credits and loans. Recently, many governments and institutions have raised concerns about the lack of fairness, equity and ethics in machine learning to treat these problems. It has been shown that not including sensitive features that bias fairness, such as gender or race, is not enough to mitigate the discrimination when other related features are included. Instead, including fairness in the objective function has been shown to be more efficient.},
  isbn = {978-3-319-71249-9}
}

@article{perez-suay_fair_2023,
  title = {Fair {{Kernel Regression}} through {{Cross-Covariance Operators}}},
  author = {{Perez-Suay}, Adrian and Gordaliza, Paula and Loubes, Jean-Michel and Sejdinovic, Dino and {Camps-Valls}, Gustau},
  year = 2023,
  journal = {Transactions on Machine Learning Research},
  issn = {2835-8856}
}

@inproceedings{rahimi_random_2007,
  title = {Random Features for Large-Scale Kernel Machines},
  booktitle = {Advances in Neural Information Processing Systems},
  author = {Rahimi, Ali and Recht, Benjamin},
  editor = {Platt, J. and Koller, D. and Singer, Y. and Roweis, S.},
  year = 2007,
  volume = {20},
  publisher = {Curran Associates, Inc.}
}

@book{rasmussen_gaussian_2008,
  title = {Gaussian Processes for Machine Learning},
  author = {Rasmussen, Carl Edward and Williams, Christopher K. I.},
  year = 2008,
  series = {Adaptive Computation and Machine Learning},
  edition = {3. print},
  publisher = {MIT Press},
  address = {Cambridge, Mass.},
  isbn = {978-0-262-18253-9},
  langid = {english}
}

@inproceedings{ravfogel_adversarial_2022,
  title = {Adversarial Concept Erasure in Kernel Space},
  booktitle = {Proceedings of the 2022 Conference on Empirical Methods in Natural Language Processing},
  author = {Ravfogel, Shauli and Vargas, Francisco and Goldberg, Yoav and Cotterell, Ryan},
  editor = {Goldberg, Yoav and Kozareva, Zornitsa and Zhang, Yue},
  year = 2022,
  month = dec,
  pages = {6034--6055},
  publisher = {Association for Computational Linguistics},
  address = {Abu Dhabi, United Arab Emirates},
  abstract = {The representation space of neural models for textual data emerges in an unsupervised manner during training. Understanding how human-interpretable concepts, such as gender, are encoded in these representations would improve the ability of users to control the content of these representations and analyze the working of the models that rely on them. One prominent approach to the control problem is the identification and removal of linear concept subspaces -- subspaces in the representation space that correspond to a given concept. While those are tractable and interpretable, neural network do not necessarily represent concepts in linear subspaces. We propose a kernelization of the recently-proposed linear concept-removal objective, and show that it is effective in guarding against the ability of certain nonlinear adversaries to recover the concept. Interestingly, our findings suggest that the division between linear and nonlinear models is overly simplistic: when considering the concept of binary gender and its neutralization, we do not find a single kernel space that exclusively contains all the concept-related information. It is therefore challenging to protect against all nonlinear adversaries at once.}
}

@inproceedings{ravfogel_null_2020,
  title = {Null {{It Out}}: {{Guarding Protected Attributes}} by {{Iterative Nullspace Projection}}},
  booktitle = {Proceedings of the 58th {{Annual Meeting}} of the {{Association}} for {{Computational Linguistics}}},
  author = {Ravfogel, Shauli and Elazar, Yanai and Gonen, Hila and Twiton, Michael and Goldberg, Yoav},
  editor = {Jurafsky, Dan and Chai, Joyce and Schluter, Natalie and Tetreault, Joel},
  year = 2020,
  month = jul,
  pages = {7237--7256},
  publisher = {Association for Computational Linguistics},
  address = {Online},
  abstract = {The ability to control for the kinds of information encoded in neural representation has a variety of use cases, especially in light of the challenge of interpreting these models. We present Iterative Null-space Projection (INLP), a novel method for removing information from neural representations. Our method is based on repeated training of linear classifiers that predict a certain property we aim to remove, followed by projection of the representations on their null-space. By doing so, the classifiers become oblivious to that target property, making it hard to linearly separate the data according to it. While applicable for multiple uses, we evaluate our method on bias and fairness use-cases, and show that our method is able to mitigate bias in word embeddings, as well as to increase fairness in a setting of multi-class classification.}
}

@article{redmond_data-driven_2002,
  title = {A Data-Driven Software Tool for Enabling Cooperative Information Sharing among Police Departments},
  author = {Redmond, Michael and Baveja, Alok},
  year = 2002,
  journal = {European Journal of Operational Research},
  volume = {141},
  number = {3},
  pages = {660--678},
  issn = {0377-2217},
  abstract = {Police departments utilize information technology for combating crime, however, mostly for tactical purposes. This paper presents an Artificial-Intelligence software, Crime Similarity System (CSS) that helps police departments develop a strategic viewpoint toward decision-making. CSS utilizes socioeconomic, crime and enforcement profiles of cities to generate a list of communities that are best candidates to cooperate and share experiences. By providing a list of relevant similar communities from whom past experience and learnings can be shared, this tool offers the potential for proactive management. CSS provides a user-friendly front-end enabling easy usage. Camden, NJ and Philadelphia, PA police departments were partners in this development effort. Feedback from these two police departments has validated the benefit of this software in uncovering opportunities for police departments to cooperate. An evaluation using human subjects showed that the CSS software provided significantly better support than a conventional database. The modeling framework developed in this work is versatile, potentially useful for applications beyond law enforcement.},
  keywords = {Artificial intelligence,Case-based reasoning,Policing,Societal problem analysis}
}

@inproceedings{scholkopf_generalized_2001,
  title = {A {{Generalized Representer Theorem}}},
  booktitle = {Computational {{Learning Theory}}},
  author = {Sch{\"o}lkopf, Bernhard and Herbrich, Ralf and Smola, Alex J.},
  editor = {Helmbold, David and Williamson, Bob},
  year = 2001,
  pages = {416--426},
  publisher = {Springer Berlin Heidelberg},
  address = {Berlin, Heidelberg},
  abstract = {Wahba's classical representer theorem states that the solutions of certain risk minimization problems involving an empirical risk term and a quadratic regularizer can be written as expansions in terms of the training examples. We generalize the theorem to a larger class of regularizers and empirical risk terms, and give a self-contained proof utilizing the feature space associated with a kernel. The result shows that a wide range of problems have optimal solutions that live in the finite dimensional span of the training examples mapped into feature space, thus enabling us to carry out kernel algorithms independent of the (potentially infinite) dimensionality of the feature space.},
  isbn = {978-3-540-44581-4}
}

@article{scholkopf_input_1999,
  title = {Input Space versus Feature Space in Kernel-Based Methods},
  author = {Sch{\"o}lkopf, B. and Mika, S. and Burges, C.J.C. and Knirsch, P. and Muller, K.-R. and Ratsch, G. and Smola, A.J.},
  year = 1999,
  journal = {IEEE Transactions on Neural Networks},
  volume = {10},
  number = {5},
  pages = {1000--1017},
  keywords = {Computational complexity,Computational geometry,Feature extraction,Kernel,Noise reduction,Pattern recognition,Polynomials,Principal component analysis,Shape,Support vector machines}
}

@book{shawe-taylor_kernel_2004,
  title = {Kernel {{Methods}} for {{Pattern Analysis}}},
  author = {{Shawe-Taylor}, John},
  year = 2004,
  publisher = {Cambridge University Press},
  address = {Cambridge},
  collaborator = {Cristianini, Nello},
  isbn = {978-0-511-21060-0},
  langid = {english}
}

@article{stypinska_ai_2023,
  title = {{{AI}} Ageism: A Critical Roadmap for Studying Age Discrimination and Exclusion in Digitalized Societies},
  author = {Stypinska, Justyna},
  year = 2023,
  month = apr,
  journal = {AI \& SOCIETY},
  volume = {38},
  number = {2},
  pages = {665--677},
  issn = {1435-5655},
  abstract = {In the last few years, we have witnessed a surge in scholarly interest and scientific evidence of how algorithms can produce discriminatory outcomes, especially with regard to gender and race. However, the analysis of fairness and bias in AI, important for the debate of AI for social good, has paid insufficient attention to the category of age and older people. Ageing populations have been largely neglected during the turn to digitality and AI. In this article, the concept of AI ageism is presented to make a theoretical contribution to how the understanding of inclusion and exclusion within the field of AI can be expanded to include the category of age. AI ageism can be defined as practices and ideologies operating within the field of AI, which exclude, discriminate, or neglect the interests, experiences, and needs of older population and can be manifested in five interconnected forms: (1) age biases in algorithms and datasets (technical level), (2) age stereotypes, prejudices and ideologies of actors in AI (individual level), (3) invisibility of old age in discourses on AI (discourse level), (4) discriminatory effects of use of AI technology on different age groups (group level), (5) exclusion as users of AI technology, services and products (user level). Additionally, the paper provides empirical illustrations of the way ageism operates in these five forms.}
}

@inproceedings{tan_learning_2020,
  title = {Learning Fair Representations for Kernel Models},
  booktitle = {Proceedings of the Twenty Third International Conference on Artificial Intelligence and Statistics},
  author = {Tan, Zilong and Yeom, Samuel and Fredrikson, Matt and Talwalkar, Ameet},
  editor = {Chiappa, Silvia and Calandra, Roberto},
  year = 2020,
  month = aug,
  series = {Proceedings of Machine Learning Research},
  volume = {108},
  pages = {155--166},
  publisher = {PMLR},
  abstract = {Fair representations are a powerful tool for establishing criteria like statistical parity, proxy non-discrimination, and equality of opportunity in learned models. Existing techniques for learning these representations are typically model-agnostic, as they preprocess the original data such that the output satisfies some fairness criterion, and can be used with arbitrary learning methods. In contrast, we demonstrate the promise of learning a model-aware fair representation, focusing on kernel-based models. We leverage the classical Sufficient Dimension Reduction (SDR) framework to construct representations as subspaces of the reproducing kernel Hilbert space (RKHS), whose member functions are guaranteed to satisfy fairness. Our method supports several fairness criteria, continuous and discrete data, and multiple protected attributes. We further show how to calibrate the accuracy tradeoff by characterizing it in terms of the principal angles between subspaces of the RKHS. Finally, we apply our approach to obtain the first Fair Gaussian Process (FGP) prior for fair Bayesian learning, and show that it is competitive with, and in some cases outperforms, state-of-the-art methods on real data.}
}

@inproceedings{williams_using_2000,
  title = {Using the {{Nystr\"om Method}} to {{Speed Up Kernel Machines}}},
  booktitle = {Advances in {{Neural Information Processing Systems}}},
  author = {Williams, Christopher and Seeger, Matthias},
  editor = {Leen, T. and Dietterich, T. and Tresp, V.},
  year = 2000,
  volume = {13},
  publisher = {MIT Press}
}

@article{xiong_optimizing_2005,
  title = {Optimizing the Kernel in the Empirical Feature Space},
  author = {Xiong, Huilin and Swamy, M.N.S. and Ahmad, M.O.},
  year = 2005,
  journal = {IEEE Transactions on Neural Networks},
  volume = {16},
  number = {2},
  pages = {460--474},
  keywords = {Class separability,Classification algorithms,data classification,empirical feature space,feature space,Kernel,kernel machines,kernel optimization,Machine learning,Machine learning algorithms,Optimization methods,Pattern recognition,Principal component analysis,Signal processing algorithms,Support vector machines,Training data}
}

@article{xu_racial_2024,
  title = {The Racial Composition of Road Users, Traffic Citations, and Police Stops},
  author = {Xu, W. and Smart, M. and Tilahun, N. and Askari, S. and Dennis, Z. and Li, H. and Levinson, D.},
  year = 2024,
  journal = {Proceedings of the National Academy of Sciences of the United States of America},
  volume = {121},
  number = {24},
  pages = {e2402547121},
  publisher = {National Academy of Sciences}
}

@article{zafar_fairness_2019,
  title = {Fairness Constraints: A Flexible Approach for Fair Classification},
  author = {Zafar, Muhammad Bilal and Valera, Isabel and {Gomez-Rodriguez}, Manuel and Gummadi, Krishna P.},
  year = 2019,
  journal = {Journal of Machine Learning Research},
  volume = {20},
  number = {75},
  pages = {1--42}
}

@article{zare_analyzing_2025,
  title = {Analyzing {{Fatal Police Shootings}}: {{The Roles}} of {{Social Vulnerability}}, {{Race}}, and {{Place}} in the {{U}}.{{S}}.},
  author = {Zare, Hossein and Ponce, Andrea N. and Valek, Rebecca and Masoudi, Niloufar and Webster, Daniel and Jr, Roland J. Thorpe and Spencer, Michelle and Crifasi, Cassandra and Gaskin, Darrell},
  year = 2025,
  journal = {American Journal of Preventive Medicine},
  volume = {68},
  number = {1},
  pages = {126--136},
  issn = {0749-3797},
  abstract = {Introduction Social vulnerability, race, and place are three important predictors of fatal police shootings. This research offers the first assessment of these factors at the zip code level. Methods The 2015--2022 Mapping Police Violence and Washington Post Fatal Force Data (2015--2022) were used and combined with the American Community Survey (2015--2022). The social vulnerability index (SVI) was computed for each zip code by using indicators suggested by CDC, then categorized into low-, medium-, and high-SVI. The analytical file included police officers who fatally shot 6,901 individuals within 32,736 zip codes between 2015 and 2022. Negative Binomial Regression (NBRG) models were run to estimate the association between number of police shootings and zip code SVI, racial composition, and access to guns using 2015-2022 data. Results Moving from low-SVI to high-SVI revealed the number of fatal police shootings increased 8.3 times, with the highest increases in Blacks (20.4 times), and Hispanics (27.1 times). The NBRG showed that moderate-, and high-SVI zip codes experienced higher fatal police shootings by 1.97, and 3.26 times than low-SVI zip codes; zip code racial composition, working age population, number of violent crimes, number of police officers and access to a gun, were other predictors of fatal police shootings. Conclusions Social vulnerability and racial composition of a zip code are associated with fatal police shooting, both independently and when considered together. What drives deadly police shootings in the United States is not one single factor, but rather complex interactions between social-vulnerability, race, and place that must be tackled synchronously. Action must be taken to address underlying determinants of disparities in policing.}
}
	\bibliographystyle{icml2026}

	\newpage
	\appendix
	\onecolumn
	\section{Proofs}\label{sec:proofs}
	\subsection{Proof for \Cref{th:projection_G}:}
	\begin{proof}
		Following the proof in \citet[Appendix A]{ravfogel_null_2020} we only need to show that we have $\mathbf{w}_1^\top \mathbf{w}_2=0$ for two consecutive iterations of our algorithm. 
		After finding $\mathbf{w}_1$ with \Cref{eq:deriv_eq} and projection in \Cref{eq:empirical_projection} we obtain $\mathbf{G}_{(1)}$ such that $\mathbf{G}_{(1)}\mathbf{w}_1 = 0 \hspace{0.25em}(*)$ .
		Iterating we now find $\mathbf{w}_2$ using $\mathbf{G}_{(1)}$.
		By the representer theorem \cite{scholkopf_generalized_2001} this can be written as a finite linear combination $\mathbf{w}_2 = \sum_{i=1}^n \xi_i \mathbf{g}_i$ where the $\mathbf{g}_i$ correspond to the rows in $\mathbf{G}_{(1)}$.
		By $(*)$ it holds: $\mathbf{g}_i^\top \mathbf{w}_1 = \mathbf{w}_1^\top\mathbf{g}_i = 0$ and therefore:
		\begin{equation*}
			\mathbf{w}_1^\top \mathbf{w}_2 = \mathbf{w}_1^\top \sum_{i=1}^n \xi_i \mathbf{g}_i  = \sum_{i=1}^n \xi_i \mathbf{w}_1^\top \mathbf{g}_i = 0.
		\end{equation*}
		The full result follows by the same argument and corollaries that are found in \citet[A.1.1-A.1.3]{ravfogel_null_2020}.
	\end{proof}
	\subsection{Proof for \Cref{th:projection}:}\label{app:main_proof}
	\begin{proof} Let $\mathbf{P}_{*}\coloneq(\text{Id}-\mathbf{w}(\mathbf{w}^\top\mathbf{w})^{-1}\mathbf{w}^\top)$, then $\mathbf{P}_{*}$ is a projection matrix so it holds:
		\begin{align*}
			\mathbf{K}_{(m)} &= \mathbf{G}_{(m)} \mathbf{G}_{(m)}^\top  \\
			&= \mathbf{G}_{(m-1)}\mathbf{P}_{*}(\mathbf{G}_{(m-1)}\mathbf{P}_{*})^\top\\
			&=\mathbf{G}_{(m-1)}\mathbf{P}_{*}\mathbf{P}_{*}^\top\mathbf{G}_{(m-1)}^\top\\
			&= \mathbf{G}_{(m-1)}\mathbf{P}_{*}{\mathbf{G}}_{(m-1)}^\top
		\end{align*}
		where we used the properties of a projection matrix $\mathbf{P}_{*}^\top=\mathbf{P}_{*}$ and $\mathbf{P}_{*}^2=\mathbf{P}_{*}$.
		Now simplifying $\mathbf{P}_{*}$ we can rewrite (for easier notation we omit the index $m-1$ here):
		\begin{align*}
			\mathbf{w}\tau_{norm}\mathbf{w}^\top &= \mathbf{G}^\top(\mathbf{G}\mathbf{G}^\top +\tilde{\alpha} \text{Id})^{-1}\mathbf{p}\tau_{norm}(\mathbf{G}^\top(\mathbf{G}\mathbf{G}^\top +\tilde{\alpha} \text{Id})^{-1}\mathbf{p})^\top\\
			&= \mathbf{G}^\top(\mathbf{G}\mathbf{G}^\top +\tilde{\alpha} \text{Id})^{-1}\mathbf{p}\tau_{norm}\mathbf{p}^\top(\mathbf{G}\mathbf{G}^\top +\tilde{\alpha} \text{Id})^{-1}\mathbf{G} \\
			&= \mathbf{G}^\top(\mathbf{K} +\tilde{\alpha} \text{Id})^{-1}\mathbf{p}\tau_{norm}\mathbf{p}^\top(\mathbf{K} +\tilde{\alpha} \text{Id})^{-1}\mathbf{G} \\
			& = \mathbf{G}^\top\mathbf{M}\mathbf{G}
		\end{align*}
		where we define $\mathbf{M}\coloneq(\mathbf{K} +\tilde{\alpha} \text{Id})^{-1}\mathbf{p}\tau_{norm}\mathbf{p}^\top(\mathbf{K} +\tilde{\alpha} \text{Id})^{-1}$ and denote the normalization scalar by $\tau_{norm}$ and write it as
		\begin{equation*}
			\tau_{norm}=(\mathbf{w}^\top\mathbf{w})^{-1}= \mathbf{p}^\top(\mathbf{K} +\tilde{\alpha} \text{Id})^{-1} \mathbf{K} (\mathbf{K} +\tilde{\alpha} \text{Id})^{-1}\mathbf{p}.
		\end{equation*}
		Further we have (again omitting the index):
		\begin{align*} 
			\mathbf{G}\mathbf{P}_{*}\mathbf{G}^\top = \mathbf{G}(\text{Id}-\mathbf{G}^\top\mathbf{M}\mathbf{G}){\mathbf{G}}^\top 
			= (\mathbf{G}-\mathbf{G}\mathbf{G}^\top\mathbf{M}\mathbf{G}){\mathbf{G}}^\top
			= (\mathbf{G}{\mathbf{G}}^\top-\mathbf{G}\mathbf{G}^\top\mathbf{M}\mathbf{G}{\mathbf{G}}^\top) \eqcolon (*)
		\end{align*}
		hence with $\mathbf{K}_{(m-1)}=\mathbf{G}_{(m-1)}\mathbf{G}_{(m-1)}^\top$ we have
		\begin{align*}
			(*) &= \mathbf{K}_{(m-1)}-\mathbf{K}_{(m-1)}\mathbf{M}\mathbf{K}_{(m-1)}
			=\mathbf{K}_{(m-1)}(\text{Id} - \mathbf{M}\mathbf{K}_{(m-1)})
		\end{align*}
		which was to be shown.
	\end{proof}
	This setup can be naturally extended to the case of multiple protected attributes by instead letting $\mathbf{W}\in\mathbb{R}^{n\times l}$ aiming to predict $\mathbf{p}\in\mathbb{R}^{n\times l}$. Then instead of the scalar $\tau_{norm}$ for the normalization we have a matrix $\mathbb{R}^{l\times l}\ni \mathcal{T}=(\mathbf{W}^\top\mathbf{W})^{-1}$ which we assume to be invertible (this is usually the case but not, for example, if the protected attributes are linearly dependent).
	Therefore \Cref{th:projection} naturally transfers.
	\newline
	\subsection{Proof for \Cref{cor:pos_kern}:}
	\begin{proof}
		By \cref{eq:recover} and \Cref{th:projection} $\mathbf{K}_{(m)}$ is equivalent to the linear kernel applied to $\mathbf{G}_{(m)}$.
	\end{proof}
	
	\renewcommand{\thetable}{A.\arabic{table}}

	\begin{table}[h]
		\begin{minipage}[t]{0.48\textwidth}
			\centering
			\begin{tabular}{c|l|l}
				\toprule
				Alg. & Dataset &  Parameters \\
				\midrule
				KRR-FKD & All 	& $\alpha=0.25,\gamma=0.05, \tilde{\alpha}=0.1$\\
				(ours) & & \\
				\midrule
				KRR-FKL & All & Equal to KRR-FKD, $\tilde{\alpha}$ not used. \\
				\midrule
				SVR-FKD & Crimes & $\epsilon=0.01, \gamma=0.05, C=0.75$  \\ 
				(ours)	& ACSInc & $\epsilon=0.005, \gamma=0.05, C=0.5$  \\  
				& ACSTra & $\epsilon=0.001, \gamma=0.01, C=0.125$  \\
				& All & $\tilde{\alpha}=0.05$\\
				\midrule
				NN-HGR 	& All & Adam, $\rho = 0.01,\eta=1e-3$,    \\
				&	  & SELU, Top.: $n \times 100 \times 80 \times 1$ \\ & & Training for $500$ epochs.\\
			\end{tabular}
			\caption{Fixed hyper-parameters for the considered approaches. RBF parameter $\gamma$, ridge penalty $\alpha$, ridge penalty for fairness $\tilde{\alpha}$, epsilon bandwidth $\epsilon$, learning rate $\eta$ and weight decay $\rho$. $n$ inputs. Adam optimizer and SELU activation.}
			\label{tab:param}
		\end{minipage}%
		\hfill 
		\begin{minipage}[t]{0.48\textwidth}
			\centering
			\begin{tabular}{c|l|l}
				\toprule
				Alg. & Dataset &  Regularization \\
				\midrule
				KRR-FKD & Crimes  & $m\in \{0, 5, 10, 18\}$  \\ 
				(ours)	& ACSInc & $m\in \{0, 15, 20, 30 \}$  \\  
				& ACSTra & $m\in \{0, 15, 20, 30 \}$  \\
				\midrule
				KRR-FKL & Crimes  & $\mu\in \{0, 0.05, 0.25, 0.6, 1 \}$  \\ 
				& ACSInc & $\mu\in \{0, 0.05, 0.25, 1 \}$  \\  
				& ACSTra & $\mu\in \{0, 0.5, 1, 2 \}$  \\
				\midrule
				SVR-FKD & Crimes  & $m\in \{0, 5, 30, 45, 60, 80 \}$  \\ 
				(ours)	& ACSInc & $m\in \{0, 45, 60, 80, 100 \}$  \\  
				& ACSTra & $m\in \{0, 45, 60, 100 \}$  \\
				\midrule
				NN-HGR 	& Crimes  & $\lambda\in \{0, 0.025, 0.1, 0.2, 0.5 \}$  \\ 
				& ACSInc & $\lambda\in \{0, 0.00625, 0.0125, 0.05 \}$  \\  
				& ACSTra & $\lambda\in \{0, 0.00625, 0.0125, 0.05 \}$  \\
			\end{tabular}
			\caption{Fairness regularization parameters that control the Fairness-Accuracy-Tradeoff. Number of iterations $m$ and penalty coefficients $\lambda$, $\mu$.}
			\label{tab:param_reg}
		\end{minipage}
	\end{table}
	
	\section{Experimental Details}\label{sec:add_exp_det}
	For the ''NN-HGR'' we use the implementation provided by the original paper \cite{mary_fairness-aware_2019}.
	The Support Vector and Kernel Ridge Regression used for training our approach with the modified kernel utilize scikit-learn \cite{pedregosa_scikit-learn_2011}.
	\Cref{tab:param} lists the fixed hyper-parameters that do not directly control the degree of fairness regularization.
	The different used fairness parameters to attain the results shown in \Cref{fig:exp_eval} are listed in \Cref{tab:param_reg}.
	
	\section{Nystroem Approximation}\label{app:nystr}
	The Nystroem approximation of a kernel matrix $\mathbf{K}\in\mathbb{R}^{n\times n}$ works by sampling a number of columns $p \ll n$ from $\mathbf{K}$.
	The indices of the columns are also referred to as ''landmarks''.
	With these we get the approximation \cite{williams_using_2000}
	\begin{align*}
		\mathbf{K} \approx  \mathbf{K}_{np} \mathbf{K}_{pp}^{-1}\mathbf{K}_{pn}
	\end{align*}
	which is exact if $p=n$ components are used or if $\mathbf{K}$ is of rank $p$ (and $p$ linearly independent columns were chosen).
	$\mathbf{K}_{pp}$ corresponds the intersection of the $p$ chosen rows and columns, $\mathbf{K}_{np}$ to the $p$ columns which are $n$ dimensional each.
	The matrix inversion lemma is given by \citep[p.~201]{rasmussen_gaussian_2008}
	\begin{align*}
		(\mathbf{Z}+\mathbf{UWV}^\top)^{-1}& =\\
		\mathbf{Z}^{-1} - \mathbf{Z}^{-1}\mathbf{U}&(\mathbf{W}^{-1} + \mathbf{V}^\top\mathbf{Z}^{-1}\mathbf{U})^{-1}\mathbf{V}^\top\mathbf{Z}^{-1}
	\end{align*}
	which with the following definitions $\mathbf{Z}\coloneq\alpha\mathbf{I}$, $\mathbf{U}\coloneq\mathbf{K}_{np},\mathbf{V}\coloneq\mathbf{K}_{np}$,$\mathbf{W}\coloneq\mathbf{K}_{pp}^{-1}$
	yields:
	\begin{align*}
		(\alpha\text{Id} + \mathbf{K})^{-1} &\approx (\alpha\text{Id} + \mathbf{K}_{np} \mathbf{K}_{pp}^{-1}\mathbf{K}_{pn})^{-1}  \\
		&= (\mathbf{Z}+\mathbf{UWV}^\top)^{-1}\\
		&=\frac{1}{\alpha}\text{Id}-\frac{1}{\alpha^2}\mathbf{K}_{np}(\mathbf{K}_{pp}+\frac{1}{\alpha}\mathbf{K}_{pn}\mathbf{K}_{np})^{-1}\mathbf{K}_{pn}.
	\end{align*}
	Evidently this only requires to invert a $p \times p$ matrix which is in $\mathcal{O}(p^3)$.
	Note that $\mathbf{K}_{pn}=\mathbf{K}_{np}^\top$.
	Of course this is an approximation so numerical issues might occur, to prevent some of them we first perform the approximation and then return the symmetrized result -- see \Cref{alg:nystr}.
	
	Further, by choosing the optimal order of matrix multiplication we end up with an additional $\mathcal{O}(n^2(p+l))$ ($l$ protected attributes) in matrix multiplications if the transformations $\mathbf{T}_{(i)}$ are not stored explicitly.
	This is detailed in the code.
	
	\begin{algorithm}[H]
		\caption{Nystroem Approx. for Matrix Inversion}
		\label{alg:nystr}
		\begin{algorithmic}[1]
			\STATE {\bfseries Input:} {Kernel matrix $\mathbf{K} \in \mathbb{R}^{n \times n}$, reg. param. $\alpha$} 
			
			\STATE Draw $p$ column/landmark indices $c\gets [c_1,...,c_p]$
			\STATE Set $\mathbf{K}_{np} \in \mathbb{R}^{n \times p}$ with entries $\mathbf{K}_{np}[i,j] \gets \mathbf{K}[i,c_j]$
			\STATE Set $\mathbf{K}_{pp} \in \mathbb{R}^{p \times p}$ with entries $\mathbf{K}_{pp}[i,j] \gets \mathbf{K}[c_i,c_j]$
			
			\STATE $\mathbf{A} \gets (\mathbf{K}_{pp}+\frac{1}{\alpha}\mathbf{K}_{pn}\mathbf{K}_{np})^{-1}$ \COMMENT{$p \times p$ inverse}
			\STATE $\mathbf{B} \gets \frac{1}{\alpha}\text{Id}-\frac{1}{\alpha^2}\mathbf{K}_{np}\mathbf{A}\mathbf{K}_{pn}$ 
			\STATE $\mathbf{B} \gets \frac{\mathbf{B} + \mathbf{B}^\top}{2}$ \COMMENT{ensure symmetry for numerical stability}
			\STATE {\bfseries Output:} {$\mathbf{B}$ the approx. of the inverse $(\mathbf{K} + \alpha  \text{Id})^{-1}$} 
		\end{algorithmic}
	\end{algorithm}
	
	\section{Additional Experiments}\label{sec:addexp}
	In our experiments we also bench-marked against the most recent method \cite{kong_fair_2025} for continuous fairness which we denote by ``NN-FREM'' but found it to be surpassed by the other baselines (``KRR-FKL''), see \Cref{fig:add_exp}.
	We used the same fixed network parameters (\Cref{tab:param}) as for ``NN-HGR'' and used the following method-specific parameters: $\gamma =0.05$, $\sigma=1$, $\lambda\in\{0, 0.5, 1.5, 5, 10\}$ across all datasets.
	We also include ``NN-HGR-L'' which is the same as ``NN-HGR'' but using a larger topology of size $n \times 250 \times 230 \times 1$ to demonstrate that increasing model complexity does not help.

	\begin{figure*}[b!]
		\centering
		\includegraphics[width=1\textwidth]{
			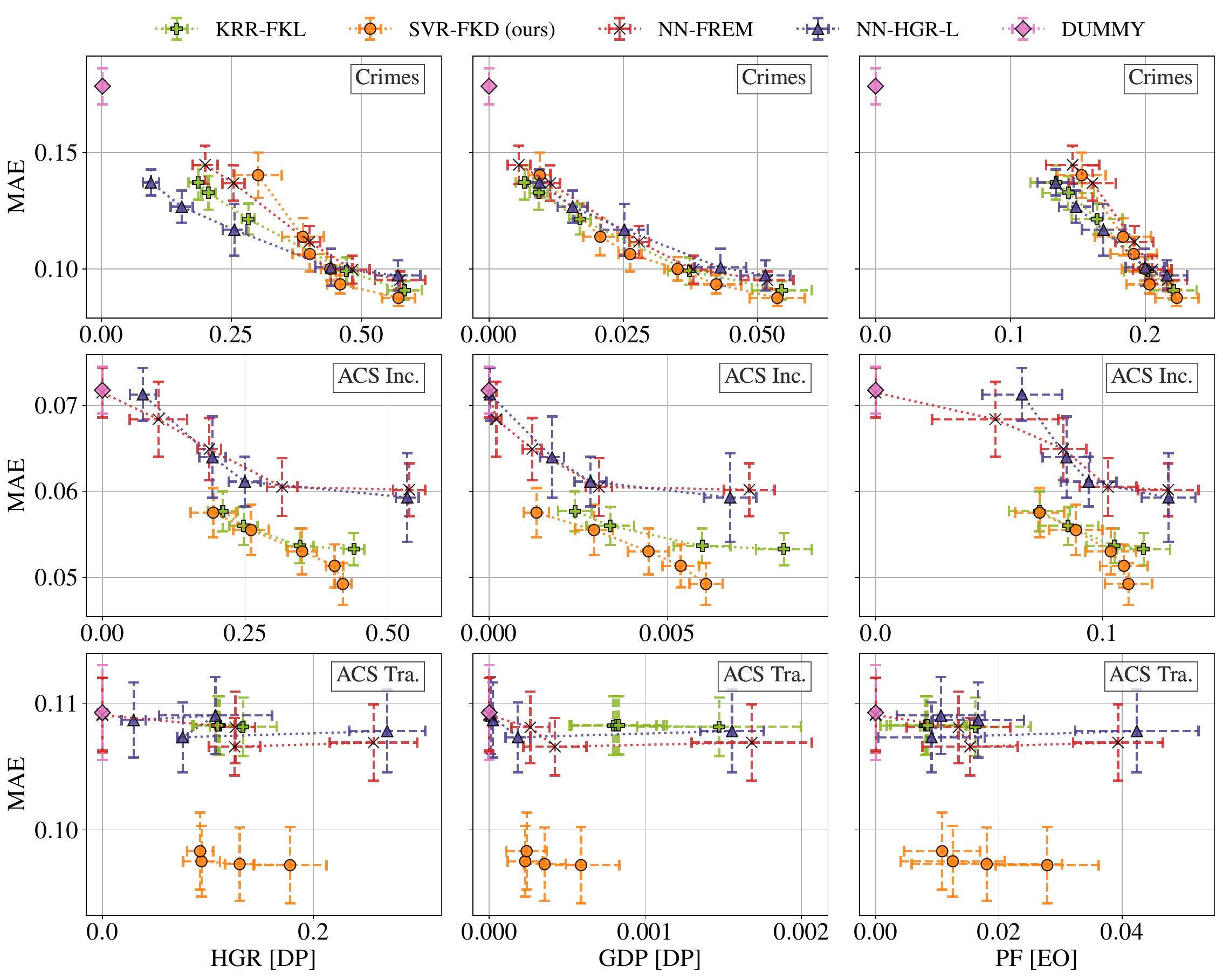}
		\caption{Same experimental setup as in \Cref{fig:exp_eval}. Selected approaches compared to the most recent ``NN-FREM'' by \citet{kong_fair_2025}.}
		\label{fig:add_exp}
	\end{figure*}	
	
\end{document}